\documentclass{article}

\usepackage[preprint]{neurips_2026}

\usepackage[utf8]{inputenc} 
\usepackage[T1]{fontenc}    
\usepackage{hyperref}       
\usepackage{url}            
\usepackage{booktabs}       
\usepackage{amsfonts}       
\usepackage{nicefrac}       
\usepackage{microtype}      
\usepackage{xcolor}         
\usepackage{amsmath,amssymb}
\usepackage{amsthm}
\usepackage{thmtools}
\usepackage{wrapfig}
\usepackage{ulem}

\newtheorem{theorem}{Theorem}[section]
\newtheorem{corollary}{Corollary}[theorem]
\newtheorem{lemma}[theorem]{Lemma}
\newtheorem{result}[theorem]{Existing Result}
\newtheorem{definition}[theorem]{Definition}

\newtheorem{assumption}[theorem]{Assumption}

\newcommand{\eps}{\varepsilon}
\newcommand{\R}{\mathbb{R}}
\newcommand{\dom}{\mathsf{dom}}

\title{Optimizing Computational-Statistical Runtime for Wasserstein Distance Estimation}

\author{%
  Peter Matthew Jacobs \\
  Department of Statistics\\
  University of Wisconsin-Madison\\
  Madison, WI 53706 \\
  \texttt{pjacobs5@wisc.edu} \\
  \And
  Jeff M. Phillips \\
  Kahlert School of Computing\\
  University of Utah\\
  Salt Lake City, UT 84101\\
  \texttt{jeffp@cs.utah.edu}
}
\begin{document}

\maketitle

\begin{abstract}
  Squared Wasserstein distance is a frequently used tool to measure discrepancy between probability distributions.  This distance is typically computed between empirical measures of size $n$ from two underlying random samples. Unfortunately, even in lower dimensional Euclidean space problems $\left( d \in \{2,3\} \right)$, algorithms for Wasserstein distance computation with approximate or exact precision guarantees scale poorly in the runtime as a function of $n$ and the desired precision. In response, we consider the computational-statistical runtime, where the goal is to estimate from samples the Wasserstein distance between potentially smooth measures up to  $\eps$-additive error in expectation with respect to the sampling; we allow $O(1)$ computational  cost for collecting a sample.  Towards this, we develop a Sample-Sketch-Solve paradigm where we introduce a regular cartesian grid sketch of the samples.  We show that (especially under $\alpha$-H\"older smooth distributions) this can compress the data without increasing asymptotic error, and also regularizes the structure which enables faster exact algorithms.  Ultimately, we approximate $W_2^2(P,Q)$ within $\eps$ error in $\eps^{-\max(2,\frac{d+1+o(1)}{1+\alpha})}$ time for $0 < \alpha < 1$ H\"older smooth distributions $P,Q$ on $(0,1)^{d}$; an optimal $\Theta(\eps^{-2})$ for $\alpha > 1/2$ when $d=2$ and nearly optimal as $\alpha \to 1$ when $d = 3$.     
\end{abstract}

\section{Introduction}


In data analysis and machine learning tasks such as parameter estimation via Minimum Distance Estimation (see for example \cite{bernton2019parameter}) or non-parametric two-sample hypothesis testing (see for example \cite{ramdas2017wasserstein}), require comparison of underlying unknown probability distributions $P,Q \in \mathcal{P}(\mathbb{R}^{d})$. The squared Wasserstein distance, $W_2^2(P,Q)$, is one increasingly popular choice for this task. When $P,Q$ are supported on $\mathbb{R}^{d}$ with finite second moments it is defined as 
\[    
   W_2^2(P,Q) := \inf_{\gamma \in \Pi(P,Q)} \mathbb{E}_{(X,Y) \sim \gamma} \| X - Y \|_2^2,
\]
where $\Pi(P,Q)$ is the set of couplings of $P,Q$ (joint distributions over $\mathbb{R}^{d} \times \mathbb{R}^{d}$ with marginals $P$ and $Q$ respectively). A standard approach for estimating $W_2^2(P,Q)$ is to collect $n$ random samples from $P$ and $Q$, construct empirical measures $\hat P_n, \hat Q_n$, and use the \textit{plug-in} estimator $W_2^2(\hat{P}_n,\hat{Q}_n)$. When $n$ is large this is computationally burdensome. The best known computational algorithm can estimate $W_2^2(\hat{P}_n,\hat{Q}_n)$ (even in the non uniformly discrete case) with additive error $\eps$ in runtime $O( n \left( \frac{\Delta_\infty}{\eps} \right)^{2})$ \citep{lahn2019graph} where $\Delta_\infty$ is the maximum squared distance between pairs of points across the two point sets. 

In this paper we take the position that analysis should start with a distributional assumption on the input $P,Q$, and that it should aim for computationally efficient algorithms.  While there is value in considering algorithms for computation of Wasserstein distance between discrete point sets (e.g., $P_n,Q_n$ of size $n$), we argue that often $P_{n}$ and $Q_n$ should be considered as empirical measures from unknown distributions $P,Q$.  This means that the target accuracy of computing $W_2^2(P_n,Q_n)$ should aim to match that induced by the sample.  Finally, towards connecting this all together, we assume it takes $O(1)$ time to create one additional sample from $P$ or $Q$.  
%
%
%
The average loss of a randomized algorithm now takes into consideration both statistical error, and computational cost. A randomized algorithm in this context is a procedure $\widehat{A}$ which takes as input a precision parameter $\eps$, and randomly returns an output $\hat{A}(\eps)$ in time $O(f(\eps))$, which on average is $\eps$ close to $W_2^2(P,Q)$. We summarize this in the next definition.

\begin{definition}(Computational-Statistical Runtime (CSR))
    \label{def:CSR}
    Suppose $P$ and $Q$ are two probability distributions on $\mathbb{R}^{d}$ for which we can access random samples at $O(1)$ cost each draw.  
    We say a randomized algorithm $\{\widehat{A}(\eps) \mid \eps > 0\}$ achieves the $O(f(\eps))$ Computational-Statistical Runtime if for every $\eps >0$, $\widehat{A}(\eps)$ has runtime $O(f(\eps))$ and achieves 
    \[
    \mathbb{E} | \widehat{A}(\eps) - W_2^2(P,Q) | \leq \eps
    \]
    where the expectation is with respect to all sources of randomness induced by the algorithm.
\end{definition}


Crucially, to achieve a CSR of $O(f(\eps))$, the algorithm itself must not only choose how to use samples, but also how many of them to gather in the first place. Ultimately, this aligns with the typical challenge faced by the practitioner: choosing an algorithm to estimate the underlying quantity (in this case $W_2^2(P,Q)$) up to $\eps$ error while minimizing the time to do so, including the cost of collecting samples.  
Or alternatively, if one already has $n$ samples $P_n,Q_n$, this perspective advocates to treat them as estimates $\hat P_n, \hat Q_n$ of some unknown true distribution $P,Q$, and then the goal is to solve $W_2^2(\hat P_n, \hat Q_n)$ up to $\eps$ error, where $\eps$ is determined by $\mathbb{E} |W_2^2(P, Q) - W_2^2(\hat P_n, \hat Q_n) | = \eps$.  

As error $\eps$ scales with the diameters of the supports of $P$ and $Q$, in order to prove $\eps$-additive error, we need an assumption that $\Delta_\infty = \max_{x,x' \in \dom(P) \cup \dom(Q)} \|x-x'\|^2$ is bounded where $\dom(P)$ denotes the support of the measure $P$.  We henceforth assume the dimension $d$ is constant, and $P,Q$ are defined on $[0,1]^d$, so $\Delta_\infty = O(1)$.  
Alternatively, we could bound error in terms of $\eps \Delta_\infty$. 

\begin{wraptable}{r}{0.4\linewidth}
  \vspace{-10mm}
  \centering
  \caption{$\eps$-approximate $W_2^2$ on $[0,1]^d$}
  \label{tab:w2-runtime}
  \scriptsize
  \setlength{\tabcolsep}{2pt}
  \renewcommand{\arraystretch}{1.05}
  \begin{tabular}{@{}p{.35\linewidth}p{.15\linewidth}p{.45\linewidth}@{}}
    \toprule
    Ref. & Notes & $\tilde O(\cdot)$ Time \\
    \midrule
    \cite{kuhn1956variants} & \textsf{uw} & $n^3$ \\
    \cite{agarwal2006bipartite} & \textsf{uw,2d} & $n^{3/2}/\eps^{3/2}$ \\
    \cite{altschuler2017near} &  & $n^2/\eps^2$ \\
    \cite{dvurechensky2018computational} &  & $n^{5/2}/\eps$ \\
    \cite{lahn2019graph} &  & $n^2/\eps$+$n/\eps^2$ \\
    \cite{lahn2019graph} & \textsf{2d} & $n/\eps^2$ \\
    \cite{lahn2021n} & \textsf{uw,2d} & $n^{5/4}\,\mathrm{poly}(1/\eps)$ \\
    \bottomrule
  \end{tabular}
  \vspace{-5mm}
\end{wraptable}

\paragraph{Related Work.}
A long line of works focuses purely on the computational problem, exactly or $\eps$-approximately, computing the $W_2^2$ between discrete measures on $[0,1]^d$ of size $n$; see Table \ref{tab:w2-runtime}.   
The Hungarian algorithm \citep{kuhn1956variants} achieves $O(n^3)$ runtime in the case that the discrete measures are uniform.  The unweighted (\textsf{uw}) matching problem was improved in $\R^2$ (\textsf{2d}) to $\tilde{O}((n/\eps)^{3/2})$ by \cite{agarwal2006bipartite} and then $\tilde{O}(n^{5/4}\mathsf{poly}(1/\eps))$ by \cite{lahn2021n} (where $\tilde{O}$ denotes absorption of logarithmic factors in the asymptotic notation).  
For general $W_2^2$ computation, the Greenkhorn algorithm was originally shown to achieve $\tilde{O}( n^2/ \eps^{3})$ \citep{altschuler2017near} and then an improved analysis in \cite{lin2019efficient} yielded $\tilde{O}(n^2/ \eps^{2})$. 
\cite{dvurechensky2018computational} uses accelerated gradient descent which \cite{lin2019efficient} shows achieves $\tilde{O}(n^{5/2} / \eps)$.  \cite{lin2019efficient} separately provide an adaptive primal dual accelerated mirror descent algorithm achieving $O(n^2 \sqrt{\gamma} / \epsilon)$ where $\gamma = O(n)$, in line with the worst case bound of \cite{dvurechensky2018computational}.
\cite{lahn2019graph} then uses graph-scaling algorithms to achieve time $O(n^2/ \eps + n/ \eps^{2})$ for general Optimal Transport computation and which improves to $\tilde{O}(n/\eps^{2})$ in $\R^2$.
But these works do not address statistical accuracy for the underlying quantity $W_2^2(P,Q)$; they assume the $n$ locations signify ground truth.  

On the statistical side, the dyadic partitioning argument presented by \cite{weed2019sharp} plus triangle inequality for $W_2$ implies that when $\hat{P}_n$ and $\hat{Q}_n$ are empirical measures from $P,Q$ respectively, $\mathbb{E} |W_2^2(\hat{P}_n,\hat{Q}_n) - W_2^2(P,Q)|$ is $O(n^{-1/4})$ when $d < 4$ and and $\tilde{O}(n^{-1/d})$ for $d \geq 4$, (bounded $\Delta_\infty$ allows to convert $W_2$ closeness to $W_2^2$ closeness).  \cite{chizat2020faster} improves upon this, showing that $\mathbb{E} |W_2^2(\hat{P}_n,\hat{Q}_n) - W_2^2(P,Q)|$ is $O(n^{-1/2})$ when $d < 4$ and $\tilde{O}(n^{-2/d})$ for $d \geq 4$. 
Thus $\eps$ error needs $n = \tilde O(1/\eps^{\max(2,d/2)})$.  
Alternatives to the traditional empirical measure plug-in approach rely on Besov smoothness \citep{niles2022minimax} of $P,Q$. 

For hardness results,  \cite{niles2022estimation} imply that the number of samples needed to estimate $W_2^2(P,Q)$ up to $\eps$ in mean absolute error is $\Omega(1/\eps^{2})$. A simple, well-known (folklore) algorithm which achieves the CSR of $O(1/\eps^{2})$ in the 1 dimensional case works by collecting $n = 1/\eps^{2}$, and then sorting the sample from $P$ ($p_{(1)} < p_{(2)} < \dots p_{(n)})$ and the samples from $Q$ ($q_{(1)} < q_{(2)} < \dots < q_{(n)})$, and computing $\frac{1}{n} \sum_{j=1}^{n} (p_{(j)} - q_{(j)})^2$ (this is simply the squared Wasserstein distance between the empirical measures). The algorithm takes $O(n \log n)$ time given the samples, and the CSR is $\tilde{O}(\eps^{-2})$ (by for example \cite{chizat2020faster} Theorem 2). 
Our work will show that even for $d=2$ or $d=3$, it is possible to nearly achieve the optimal $\tilde{O}(\eps^{-2})$ CSR as long as the underlying distributions are sufficiently smooth.  


\cite{chizat2020faster} also explicitly considers the joint computational-statistical runtime. They provide CSRs using Entropic Optimal Transport, the Sinkhorn-Divergence, and a Richardson extrapolation of the Sinkhorn Divergence, respectively achieving $\tilde{O}(\eps^{-\max(6,d+2)}), \tilde{O}(\eps^{-(d'+5.5)})$ and $\tilde{O}(\eps^{-\frac{(d'+11)}{2}})$ where $d' := 2 \lfloor \frac{d}{2} \rfloor$. However, these rates are all inferior to the rate achieved by using the plug-in empirical measure approach with the $O(n \eps^{-2})$ algorithm of \cite{lahn2019graph} in $\R^2$, with the tight statistical bound for the plug-in empirical measure approach from \cite{chizat2020faster} mentioned above. These bounds are summarized in Table \ref{tab:SOTA}, along with our results which we outline next.

\begin{table}[t]
\caption{Computational-Statistical Runtimes (CSR) for $\eps$-additive $W_2^2(P,Q)$ estimation on $(0,1)^d$, where $d' = 2 \lfloor \frac{d}{2} \rfloor$.   
For conciseness we omit not only $\mathsf{poly}(\log (1/\eps))$ terms but also $\eps^{-o(1)}$ terms. Note \textbf{Our Result} [Theorem \ref{thm:main-CSR}] achieves $\eps^{-2}$ for $\alpha > 1/2$ in $\R^2$ and and $\eps^{-2}$ in $\R^3$ when $\alpha \to 1$.  }
\begin{tabular}{llccc}
\toprule
Method & Notes & CSR in $\R^d$ & in $\R^2$ & in $\R^3$\\
\midrule
Sinkhorn  \cite{chizat2020faster} &  & $\eps^{-(d'+5.5)}$ & $\eps^{-7.5}$ & $\eps^{-7.5}$  \\
SH + Richardson Extrap \cite{chizat2020faster} &  & $\eps^{-\frac{(d'+11)}{2}}$ & $\eps^{-6.5}$ & $\eps^{-6.5}$ \\
Entropic OT \cite{chizat2020faster} &  & $\eps^{-\max(6,d+2)}$ & $\eps^{-6}$ & $\eps^{-6}$  \\
Empirical Plugin + \cite{lahn2019graph} &  & $\eps^{-(\max(d,4)+1-\mathbb{I}(d=2))}$ & $\eps^{-4}$ & $\eps^{-5}$\\
\midrule
\textbf{Our Result} [Corollary \ref{cor:CSR-non-smooth}] &  & $\eps^{-\max(2,d+1)}$ & $\eps^{-3}$ & $\eps^{-4}$ \\
\textbf{Our Result} [Theorem \ref{thm:main-CSR}] & \textsf{$\alpha$-H\"ol} & $\eps^{-\max(2,\frac{d+1}{1+\alpha})}$ & $\eps^{-2}$ & $\eps^{-2}$\\
\bottomrule
\end{tabular}
\label{tab:SOTA}
\end{table}


\paragraph{Contributions.} 
We introduce what we call the \emph{Sample-Sketch-Solve} paradigm for achieving fasts CSRs for $W_2^2(P,Q)$ with $P,Q$ supported on $(0,1)^d$, and the distributions possess H\"older smooth densities.  
The paradigm operates in three phases: 
 (1) it \emph{samples} $n$ points from distributions $P$ and $Q$; 
 (2) it \emph{sketches} these samples into a compressed representation; and 
 (3) it \emph{solves} for the (approximately) optimal transportation plan on the compressed representation.  

In particular, our compressed representation is a regular Cartesian grid on $[0,1]^d$, with $L$ divisions on each dimension.  Each sample within a grid cell is mapped to the center so that there are $\max(n,L^d)$ total atoms.  We set $L = \Theta(\eps^{-\frac{1}{1+\alpha}})$ when $P$ and $Q$ are $\alpha$-H\"older smooth. This regular grid structure is useful in two ways:  for achieving a high level of precision in approximating $W_2^2(P,Q)$ when $P$ and $Q$ are smooth using the sketched measures, and for efficiently solving for the Optimal Transport cost between the sketched measures.



In Section \ref{sec:discError}, we present new bounds on the error in approximating $W_2^2(P,Q)$ between continuous probability distributions using this grid based discretization. 
We consider Holder smoothness constraints on the probability densities of $P,Q$, and rely on a novel, direct coupling based argument via a recent result of \cite{collins2025boundary} on the regularity of the Brenier map.  Still, our result here is not trivially obtainable using the triangle inequality, and is instead a result of a midpoint rule type cancellation of first order terms.  Thus we crucially rely on mapping to the center point of the bounded diameter grid cells of our sketch.  

Then in Section \ref{sec:grid-exact} we outline a new approach for efficient exact computation of $W_2^2(P,Q)$ that leverages 
(1) $P,Q$ are supported on a regular grid with $L$ splits per axis, and 
(2) each atom has normalized weight a multiple of $1/n$.  
In this setting we can use the regular grid structure and the decomposable nature of $W_2^2$ to leverage a result of \cite{auricchio2018computing} to transform the problem into a $(d+1)$-partite graph min-cost flow problem.  Then given the control on the weights and the bounded grid structure,  we control the capacities of this flow problem, and in turn apply a fast algorithm of \cite{brand2023deterministic}.  This results in a $L^{d+1+o(1)}$ exact algorithm for Wasserstein distance computation on our sketch.  

Finally, in Section \ref{sec:transportAlg} we put these pieces together, for $\alpha$-H\"older distributions, to obtain the $O(1/\eps^2)$ CSR algorithm for $\eps$-additive error for $W_2^2(P,Q)$ in $\R^2$ when $\alpha > 1/2$, nearly O($1/\eps^2)$ in $\R^3$ when $\alpha \to 1$, and in general $O(\eps^{-\max(2,\frac{d+1+o(1)}{1+\alpha})})$ in $\R^d$ for $0 < \alpha < 1$ ; we also separately analyze the general non-smooth case.  Table \ref{tab:SOTA} displays the CSRs for the general case and $\alpha$ H\"older (\textsf{$\alpha$-H\"ol}) case.

\subsection{Preliminaries}
\label{sec:prelim}
For a function $f: A \subseteq \mathbb{R}^{d} \to \mathbb{R}^{z}$, $| f |_{C^{0,\alpha}(A)} := \sup_{x,y \in A, x \neq y} \frac{\|f(x) - f(y)\|}{\| x - y\|^{\alpha}}$ denotes the $\alpha$-H\"older semi-norm of $f$ where $0 < \alpha \leq 1$. When $|f|_{C^{0,\alpha}(A)} < \infty$, we call $f$ $\alpha$-H\"older smooth~\citep{tsybakov2008nonparametric,singh2018nonparametric}. 
The higher the value of $\alpha$, the higher the level of smoothness. The $\alpha=1$ case is referred to as Lipchitz smoothness. Note that on bounded domains, by the Mean Value Theorem if $f$ is differentiable with upper bounded derivative, it is Lipchitz smooth. For $0 < \alpha \leq 1$, $f \in C^{1,\alpha}(A)$ means $f$ is differentiable and its gradient $\nabla f$ is $\alpha$-H\"older smooth. When we say a probability distribution $P$ supported on $(0,1)^{d}$ is $\alpha$-H\"{o}lder smooth for some $0 < \alpha \leq 1$, we mean it  possesses a probability density function $\mu$ which is $\alpha$-H\"{o}lder smooth on $(0,1)^{d}$. That is, for any Borel measurable subset of $(0,1)^{d}$, $A$, $P(A) = \int_{x \in A} \mu(x) dx$, and for some $C >0$, $|\mu|_{C^{0,\alpha}((0,1)^{d})} = \sup_{x,y \in (0,1)^{d}, x \neq y} \frac{|\mu(x) - \mu(y)|}{\| x - y \|^{\alpha}} = C < \infty$.  
We encapsulate these common parameterizations in the following assumption:

\begin{assumption}
    \label{assump:CoreMainText}
    $P,Q$ are probability measures on $(0,1)^{d}$ with densities $\mu$ and $\nu$ respectively satisfying for some $C >1 $ and for some $0 < \alpha <1$,
    \begin{enumerate}
    \item $P$ and $Q$ are $\alpha$-H\"older smooth:  $| \mu |_{C^{0,\alpha}((0,1)^{d})} + | \nu |_{C^{0,\alpha}((0,1)^{d})} \leq C$
    \item $P$ and $Q$ bounded above and below:  $C^{-1} \leq \mu(x),\nu(x) \leq C$ for $x \in (0,1)^{d}$
\end{enumerate}
When $P,Q$ satisfy both parts of this assumption we say they are \emph{($\alpha,C)$-controlled}.  
\end{assumption}

\paragraph{Gridding.} Consider $h > 0$ such that $L = \frac{1}{h}$ is an integer, $\mathcal{G}_{h}$ denotes the partition of $(0,1)^{d}$ constructed by partitioning each axis into intervals of width $h$. 
Let $\bar G^d_h \subset (0,1)^d$ be the $1/h^d$ points at the center of these grid cells.  
$G_{h}: (0,1)^{d} \to \bar G_h^{d}$ is the function that projects a point to the center of the grid cell in $\mathcal{G}_{h}$ containing it.

\paragraph{Pushforward.}
If $f: (0,1)^{d} \to Z$ for some set $Z$ is a Borel measurable map, and $P$ is a Borel probability measure on $(0,1)^{d}$, the notation $f \# P$ denotes the \emph{pushforward}. Specifically for measurable $A \subseteq Z$, $f \# P(A) = P(x \in (0,1)^{d}: f(x) \in A)$. $\mathrm{Id}_{A}$  refers to the identity map defined on $A$. 

\section{Grid Based Discretization Error Bounds}
\label{sec:discError}

In this section we provide an improved upper bound on the error in approximating $W_2^2(P,Q)$ by constructing discrete measures $G_{h} \# P$ and $G_{h} \# Q$ and using the plug-in approach $W_2^2(G_{h} \# P, G_{h} \# Q)$. Note that $G_{h} \# P$ (resp. $G_{h} \# Q$) are the measures constructed from $P$ (resp. $Q$) by collapsing all mass within each grid cell in each of the $O(h^{d})$ grid cells in $\mathcal{G}_{h}$ onto the center of each cell. For Entropic Optimal transport this subject was addressed by \cite{chizat2020faster} but the case with $\lambda = 0$ (i.e \textit{no} entropic regularization, which corresponds to $W_2^2$) has not been considered. We show a continuum of approximation depending on the smoothness level of the probability measures $P,Q$. 


Our proof relies on regularity results guaranteeing the smoothness of the optimal transport map which deterministically pushes forward $P$ into $Q$. We first remind the reader of the famous Theorem due to \cite{brenier1991polar} guaranteeing the existence of such a map.

\begin{result}[(Brenier) Theorem 1.16 of \cite{chewi2024statistical}]
    \label{result:brenier}
    If $P,Q$ are defined on $\mathbb{R}^{d}$ with bounded second moments, and $P$ possesses a probability density, then there exists a convex function $u :\mathbb{R}^{d} \to \mathbb{R}$ and corresponding function $Z: \mathbb{R}^{d} \to \mathbb{R}^{d} \times \mathbb{R}^{d}$, defined by $Z(x) := (x,\nabla u(x))$ such that $Z \# P$ is the unique coupling of $P$ and $Q$ achieving the infima in the definition of $W_2^2(P,Q)$.
\end{result}

By Result \ref{result:brenier}, $(\nabla u) \# P = Q$ and $T := \nabla u$ is thus referred to as the optimal transport map. Note that by a change of measure (when Brenier's Theorem applies), $W_2^2(P,Q) = \int_{x \in \mathbb{R}^{d}} \| x - T(x) \|_2^2 P(dx)$. \cite{caffarelli1992regularity,caffarelli1996boundary} initiated the study of the regularity of the optimal transport map. We use a new contribution in transport map regularity theory due to \cite{collins2025boundary} which establishes the global $\alpha$-H\"older smoothness of $\nabla u$ when $P$ and $Q$ have $\alpha$-H\"older densities themselves.

\begin{restatable}[Global Regularity of Brenier Map for Convex Domains via \cite{collins2025boundary} Theorem 5.1]{result}{collins} 
\label{res:Collins}
Suppose $P$ and $Q$ satisfy Assumption \ref{assump:CoreMainText} for some $C > 1$, $0 < \alpha < 1$. Then the convex functions $u,v$ of Brenier's Theorem satisfy $u \in C^{1,1-\xi}((0,1)^{d})$ and $v \in C^{1,1-\xi}((0,1)^{d})$ and
\[
| \nabla u |_{C^{0,1-\xi}((0,1)^{d})} + | \nabla v |_{C^{0,1-\xi}((0,1)^{d})} \leq M
\]
for any $0 < \xi < 1$ where $M$ depends only on $d,\alpha,C,\xi$. In particular the Optimal Transport maps $\nabla u$ and $\nabla v$ are $(1-\xi)$-Hölder continuous for any $0 < \xi < 1$. 
\end{restatable}

Before we proceed to the main result, we need a corollary.  
\begin{corollary}
    \label{cor:helperForDiscErrorUpperBdd}
    Let $P,Q$ be $(\alpha,C)$-controlled for some $C>1$ and $0 < \alpha < 1$, with optimal transportation map $T_\mu$ from $P$ to $Q$ (resp. $T_{\nu}$ from $Q$ to $P$), and displacement function $\ell_\mu(x) = x - T_\mu(x)$ (resp. $\ell_{\nu}(x) = x - T_\nu(x))$.  Then $\|\ell_{\mu}(x) - \ell_{\mu}(G_h(x))\| \leq K h^{1-\xi}$ (resp. $\|\ell_{\nu}(x) - \ell_{\nu}(G_h(x))\| \leq K h^{1-\xi}$) for any $0 < \xi < 1$, and where $K$ is a constant that depends only on $d$,$C, \alpha, \xi$.
\end{corollary}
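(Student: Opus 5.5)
The plan is to split the displacement difference with the triangle inequality and then apply the Hölder bound of Result~\ref{res:Collins} to the Brenier map. Fix $0<\xi<1$ and $x\in(0,1)^d$. Write $T_\mu=\nabla u$, where $u$ is the Brenier potential from Result~\ref{result:brenier}; by Assumption~\ref{assump:CoreMainText}, $P$ has a density, so that result applies. Then
\[
\|\ell_\mu(x)-\ell_\mu(G_h(x))\| \le \|x-G_h(x)\| + \|\nabla u(x)-\nabla u(G_h(x))\|.
\]

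For the first term, $x$ and $G_h(x)$ lie in the same cell of $\mathcal{G}_h$, which is a cube of side $h$. Since $G_h(x)$ is the cell center, $\|x-G_h(x)\|\le \tfrac{\sqrt d}{2}h$. The cell lies in $(0,1)^d$, and so does its center, so both points are in the domain where Result~\ref{res:Collins} applies. That result gives $|\nabla u|_{C^{0,1-\xi}((0,1)^d)}\le M$ with $M=M(d,\alpha,C,\xi)$, hence
\[
\|\nabla u(x)-\nabla u(G_h(x))\| \le M\|x-G_h(x)\|^{1-\xi}\le M\left(\tfrac{\sqrt d}{2}\right)^{1-\xi}h^{1-\xi}.
\]

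To combine the two terms, use $h\le 1$ (since $L=1/h$ is a positive integer) and $1-\xi<1$. These give $h\le h^{1-\xi}$, so the first term is at most $\tfrac{\sqrt d}{2}h^{1-\xi}$. Setting $K=\tfrac{\sqrt d}{2}+M(\tfrac{\sqrt d}{2})^{1-\xi}$ gives the bound, and $K$ depends only on $d,C,\alpha,\xi$.

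The statement for $\ell_\nu$ follows in the same way using $\nabla v$ and the bound on $|\nabla v|$ from Result~\ref{res:Collins}, after swapping the roles of $P$ and $Q$ (the assumption is symmetric in them).

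I do not expect a real obstacle. The only point needing care is that the Hölder bound in Result~\ref{res:Collins} is stated on the open cube. I will therefore check that $G_h(x)$, being the center of a cell inside $(0,1)^d$, lies in that open set. I will also check that $T_\mu$ agrees with $\nabla u$ pointwise there, rather than only $P$-a.e.; this holds because $u\in C^{1,1-\xi}$, so $\nabla u$ is a genuine continuous representative.
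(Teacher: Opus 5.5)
Your proposal is correct and follows essentially the same route as the paper, which argues that $\ell_\mu = \mathrm{Id} - \nabla u$ is $(1-\xi)$-H\"older as the sum of two $(1-\xi)$-H\"older maps: the identity (Lipschitz, hence $(1-\xi)$-H\"older on the bounded cube) and the Brenier map from Result~\ref{res:Collins}. Your triangle-inequality split with $h \le h^{1-\xi}$ and explicit $K=\tfrac{\sqrt d}{2}+M(\tfrac{\sqrt d}{2})^{1-\xi}$ is a concrete version of that argument. Your checks that $G_h(x)\in(0,1)^d$ and that $T_\mu$ is identified with the continuous representative $\nabla u$ fill in details the paper leaves implicit.
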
    
\begin{proof}
    Since $P,Q$ are $(\alpha,C)$-controlled, Existing Result \ref{res:Collins} holds. This means the OT map, $T_{\mu}$, (which is equivalently $\nabla u$ where $u$ is the convex function from Existing Result \ref{res:Collins}), is $(1- \xi)$-Hölder continuous for any $0 < \xi < 1$. Since the identity function, $f(x) = x$, is Lipchitz continuous, it is also $(1-\xi)$-Hölder continuous; also the sum of two $(1-\xi)$-Hölder continuous functions is also $(1-\xi)$-Hölder. Hence the difference between $x$ and $T_{\mu}(x)$, $\ell_{\mu}(x)$, is $(1-\xi)$-Hölder for any $\xi \in (0,1)$. The argument is analogous for $\ell_{\nu}$. 
\end{proof}

Now we state our main contribution of this section.  

\begin{restatable}{theorem}{discError}
    \label{thm:discError}
    If $P$ and $Q$ are $(\alpha,C)$-controlled (Assumption \ref{assump:CoreMainText}) for some $C > 1, 0 < \alpha < 1$, then for some constant $K$ depending only on $d,C,\alpha$, and every $h > 0$
    \[
    \left| W_2^2( G_{h} \# P, G_{h} \# Q) - W_2^2(P,Q) \right| \leq K h^{1+\alpha}
    \]
\end{restatable}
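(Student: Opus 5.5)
The bound $|W_2^2(G_h\#P,G_h\#Q)-W_2^2(P,Q)|\le Kh^{1+\alpha}$ is two-sided, so I would prove each direction by writing down an explicit coupling and expanding the squared cost. The key mechanism is cancellation of first-order terms: the offset $x-G_h(x)$ averages to nearly zero over each cell, because the cell is symmetric about its center. It suffices to treat $h\le 1$ with $1/h$ an integer, since for larger $h$ the bound is trivial: both quantities are at most $d$.

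\emph{Step 1: $W_2^2(G_h\#P,G_h\#Q)\le W_2^2(P,Q)+Kh^{1+\alpha}$.} Let $T=T_\mu$ be the Brenier map (Result \ref{result:brenier}). The coupling law of $(G_h(X),G_h(T(X)))$ with $X\sim P$ is admissible for the gridded problem. Set $e(x)=G_h(x)-x$, so $\|e\|\le \sqrt d\,h/2$. Then
\[
\|G_h(X)-G_h(TX)\|^2=\|\ell_\mu(X)\|^2+2\langle \ell_\mu(X),e(X)\rangle-2\langle \ell_\mu(X),e(TX)\rangle+O(h^2).
\]
Taking expectations, the first term gives $W_2^2(P,Q)$.

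For the term $\mathbb{E}_P\langle\ell_\mu(X),e(X)\rangle$, I split the expectation over cells $A$ with center $c_A$. On each cell I replace $\ell_\mu(x)$ by $\ell_\mu(c_A)$, at cost at most $Kh^{1-\xi}\cdot h$ by Corollary \ref{cor:helperForDiscErrorUpperBdd}. I also replace $\mu(x)$ by $\mu(c_A)$, at cost at most $Ch^{\alpha}\cdot h$ per unit volume by Assumption \ref{assump:CoreMainText}. After both replacements, the remaining term is $\langle \ell_\mu(c_A),\mu(c_A)\int_A(c_A-x)\,dx\rangle=0$ by the midpoint symmetry. Summing over cells gives $|\mathbb{E}\langle\ell_\mu,e\rangle|\lesssim h^{2-\xi}+h^{1+\alpha}$.

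For the term $\mathbb{E}_P\langle\ell_\mu(X),e(TX)\rangle$, I substitute $Y=TX\sim Q$. Since $\ell_\mu(X)=X-TX=T_\nu(Y)-Y=-\ell_\nu(Y)$, this becomes $-\mathbb{E}_Q\langle \ell_\nu(Y),e(Y)\rangle$, which is bounded identically using the $\ell_\nu$ part of Corollary \ref{cor:helperForDiscErrorUpperBdd} and the regularity of $\nu$. Choosing $\xi=1-\alpha\in(0,1)$ makes $h^{2-\xi}=h^{1+\alpha}$, and $h^2\le h^{1+\alpha}$ since $h\le1$. This completes Step 1.

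\emph{Step 2: $W_2^2(P,Q)\le W_2^2(G_h\#P,G_h\#Q)+Kh^{1+\alpha}$.} Let $\pi$ be an optimal coupling of the gridded measures. I lift it to a coupling $\gamma$ of $P$ and $Q$ by taking the mixture, with weights $\pi(a,b)$, of independent products $P(\cdot\mid A_a)\otimes Q(\cdot\mid A_b)$ over pairs of cells. Its marginals are $P$ and $Q$. Write $x=a+(x-a)$ and $y=b+(y-b)$, and let $m_a=\mathbb{E}_P[x-a\mid A_a]$ and $m'_b=\mathbb{E}_Q[y-b\mid A_b]$. Conditional independence within each pair gives
\[
\mathbb{E}_\gamma\|x-y\|^2=\sum\pi(a,b)\Big(\|a-b\|^2+2\langle a-b,m_a-m'_b\rangle-2\langle m_a,m'_b\rangle\Big)+O(h^2).
\]
The first sum is $W_2^2(G_h\#P,G_h\#Q)$. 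Because $\mu\ge C^{-1}$ and $\mu$ is $\alpha$-Hölder, $\|m_a\|=\big\|\int_A(x-a)(\mu(x)-\mu(a))dx\big/P(A)\big\|\lesssim C^2h^{1+\alpha}$, and similarly for $m'_b$. The remaining terms are therefore $O(h^{1+\alpha}+h^{2+2\alpha}+h^2)$, using $\|a-b\|\le\sqrt d$.

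I expect Step 1 to be the main obstacle. Naively, the cross term is of order $h$, and it only becomes $O(h^{1+\alpha})$ by combining the midpoint cancellation with the Hölder regularity of both the density and the transport map; this is exactly where Result \ref{res:Collins} (through Corollary \ref{cor:helperForDiscErrorUpperBdd}) is indispensable. The identity $\ell_\mu(X)=-\ell_\nu(T X)$ is what lets the second cross term be handled under $Q$ on the $Q$-grid.
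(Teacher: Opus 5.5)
Your proposal is correct and follows essentially the same route as the paper. The upper bound pushes the Brenier coupling onto the grid and removes the first-order cross terms by midpoint cancellation together with H\"older regularity of $\mu,\nu$ and of $\ell_\mu,\ell_\nu$ (with $\xi=1-\alpha$, using $\ell_\mu(X)=-\ell_\nu(T_\mu X)$ to move the second cross term to $Q$). The lower bound lifts an optimal gridded plan to the cellwise product coupling $\sum\pi(a,b)\,P(\cdot\mid A_a)\otimes Q(\cdot\mid A_b)$, exactly as in the appendix. The only minor difference is that you bound the conditional means $m_a$ uniformly using $\mu\ge C^{-1}$. The paper instead weights by $\pi^*$, so that $\sum_j\pi^*(c_i,c_j)=P(C_i)$ cancels the normalization, which shows the lower bound does not need the density lower bound.
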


Here we show the proof of the upper bound $W_2^2(G_{h} \# P, G_{h} \# Q) - W_2^2(P,Q) \leq K h^{1+\alpha}$. The argument for the lower bound $W_2^2(G_{h} \# P, G_{h} \# Q) - W_2^2(P,Q)  \geq -K h^{1+\alpha}$ is similar (see appendix Section \ref{app:secDiscError} for complete details). In the argument below note that we repeatedly use the general constant $K$ to indicate a constant depending only on $d,C$ and $\alpha$.

\begin{proof}[proof of upper bound]
    The proof will proceed in three phases.  
    First, we will define a transportation plan $\pi$ which couples $G_h \# P$ with $G_h \# Q$ and show it is indeed a valid coupling.  
    Second, we will use this coupling to upper bound $W_2^2(G_h \#P, G_h \# Q)$ by $W_2^2(P,Q)+remainder$.  
    Third, we leverage properties of the scale of the grid cells, and the $\alpha$-H\"olderness of both $\mu,\nu$ and the Brenier maps' to bound each term in the remainder by either either $K h^2$ or $K h^{1+\alpha}$.  

     \textbf{Phase 1:} 
     Consider the probability measure $\pi:= S \# \gamma$ where $S: (0,1)^{d} \times (0,1)^{d} \to (0,1)^{d} \times (0,1)^{d}$ is defined as $S(x,y) = (G_{h}(x),G_{h}(y))$ and $\gamma$ is an optimal coupling of $P,Q$ under squared Euclidean cost. $\pi$ projects the mass of $\gamma$ onto the grid cell centers, and thus since $\gamma$ couples $P$ and $Q$, $\pi$ is a valid coupling for $G_{h} \# P_\mu$ and  $G_{h} \# P_\nu$. We now formalize this by proving that the marginals are preserved.   That is, for any $A \subseteq (0,1)^d$, 
    \begin{equation*}
    \begin{split}
    \pi(A \times (0,1)^d) &= \gamma(S^{-1}(A \times (0,1)^d)) \\
    &= \gamma((x,y) \in (0,1)^{2d} : x \in G_{h}^{-1}(A),y \in (0,1)^d) \\
    &= P(G_{h}^{-1}(A)) = G_{h} \# P(A)
    \end{split}
    \end{equation*}
    and similarly $\pi((0,1)^d \times A) = G_{h} \# Q(A)$. 
    
    \textbf{Phase 2:} 
    We can now state 
\[
        W_2^2(G_{h} \# P, G_{h} \# Q) \leq \mathbb{E}_{(X,Y)\sim \pi} \|X - Y \|_2^2 = \int_{x \in (0,1)^{d},y \in (0,1)^{d}} \| G_{h}(x) - G_{h}(y) \|_2^2 d \gamma(x,y).
\]  
       Applying Brenier's theorem we can write $\gamma = (\mathrm{Id}_{(0,1)^{d}}, T_{\mu}) \# P$ where $T_{\mu}$ is the OT map (pushing $P$ to $Q$) and so the above integral upper bound becomes (after change of measure)
\[        
    W_2^2(G_{h} \# P, G_{h} \# Q) \leq \int_{x \in (0,1)^{d}} \| G_{h}(x) - G_{h}(T_{\mu}(x)) \|_2^2 \mu(x) dx.
\]
    To isolate the contribution of $W_2^2(P,Q)$ to the integral on RHS, we denote $\Delta z = z - G_{h}(z)$ for $z \in (0,1)^{d}$ so that
    \[
    G_{h}(x) - G_{h}(T_{\mu}(x)) = \left(x - T_{\mu}(x) \right) - (\Delta x  - \Delta T_{\mu}(x)).
    \]
    Thus using Brenier's Theorem and expanding the square in the integral using the above identity, and using that 
    $W_2^2(P,Q) = \int_x (x - T_\mu(x))^2 \mu(x) dx$, 
    we have
    \[
    W_2^2(G_{h} \# P, G_{h} \# Q) - W_2^2(P,Q) \leq \int_{x} \| \Delta x - \Delta T_{\mu}(x) \|_2^2 - 2 \langle x - T_{\mu}(x), \Delta x - \Delta T_{\mu}(x) \rangle \mu(x) dx.
    \]

    \textbf{Phase 3:}     We will use three properties:
\begin{itemize}
  \item[\textbf{[P1]}] For all $x \in (0,1)^{d}$, $\| \Delta x \|_2 = \|x - G_{h}(x)\| \leq \sqrt{d} h$ .
  \item[\textbf{[P2]}] The Brenier displacement $\ell_{\mu}(x) := x - T_{\mu}(x)$ takes values in the bounded domain $(-1,1)^d$.
  \item[\textbf{[P3]}] Since $P,Q$ are $(\alpha,C)$-controlled, by Corollary \ref{cor:helperForDiscErrorUpperBdd}, the Brenier displacement $\ell_{\mu}(x) = x - T_{\mu}(x)$ satisfies $\|\ell(x) - \ell(G_h(x))\| \leq K h^{\alpha}$ (by setting $\xi = 1-\alpha$). 
%
  \item[\textbf{[P4]}] Since $P,Q$ are $(\alpha,C)$-controlled, $\mu(x) \leq C = O(1)$. Additionally, as $\mu$ is $\alpha$-Hölder, $|\mu(x) -\mu(G_h(x))|\leq K h^{\alpha}$.  
\end{itemize}

     The first term in the remainder, $\int_x \|\Delta x - \Delta T_\mu(x)\|_2^2 dx$, scales as $K h^{2}$ by triangle inequality yielding $\|\Delta x - \Delta T_\mu(x)\| \leq \|\Delta x\| + \|\Delta T_\mu(x)\|$ and then application of [\textbf{P1}].  We are left with
    \begin{equation}
        \label{discretizationErrorUpperPenultimate}
    W_2^2(G_{h} \# P_{\mu}, G_{h} \# P_{\nu}) - W_2^2(P_{\mu},P_{\nu}) \leq K h^{2} - 2 \int_{x \in (0,1)^{d}} \langle x - T_{\mu}(x), \Delta x - \Delta T_{\mu}(x) \rangle \mu(x) dx.
    \end{equation}
    It remains to bound the absolute value of the remaining integral. We have
\[
\begin{aligned}
    |\int_{x} \langle x - T_{\mu}(x), \Delta x - \Delta T_{\mu}(x) \rangle \mu(x) dx| 
    &\leq \\
    |\int_x \langle x - T_{\mu}(x), \Delta x \rangle \mu(x) dx| + | \int_x \langle x - T_{\mu}(x), \Delta T_{\mu}(x) \rangle \mu(x) dx|.
\end{aligned}
\]
    The second of these terms we can show is symmetric to the first by using from Brenier's Theorem that the optimal coupling is unique.  In particular, we can re-write $\gamma = (\mathrm{Id}_{(0,1)^d},T_\mu) \# P$ from above in the other direction as $\gamma = (T_{\nu},\mathrm{Id}_{(0,1)^{d}}) \# Q$ (where $T_{\nu}$ is the OT map pushing $Q$ to $P$).  Then we can rewrite this by first integrating over the coupling measure, and then swapping the direction:
    \[
\begin{aligned}
    \int_{x \in (0,1)^{d}} \langle x - T_{\mu}(x), \Delta T_{\mu}(x) \rangle \mu(x) dx &= \int_{(x,y) \in (0,1)^{d} \times (0,1)^{d}} \langle x - y, \Delta y \rangle d \gamma(x,y) \\
      &= \int_{y \in (0,1)^{d}} \langle T_{\nu}(y) - y, \Delta y \rangle \nu(y) dy.
\end{aligned}
\]
    Now the two terms we need to bound 
    $| \int_{x \in (0,1)^{d}} \langle x - T_{\mu}(x), \Delta x \rangle \mu(x) dx|$  and 
    $|\int_{y \in (0,1)^{d}} \langle T_{\nu}(y) - y, \Delta y \rangle \nu(y) dy|$ 
    are symmetric to each other, and just show how to bound the first term.

    We arbitrarily label the cells $C_1,\dots,C_{(1/h)^{d}}$ (without loss of generality assuming $1/h$ is an integer), and the corresponding cell centers as $c_{1},\dots,c_{(1/h)^{d}}$.  
    Observe that for every $j$, $\int_{C_j} \langle \ell_{\mu}(c_j),x-c_j \rangle \mu(c_j) = 0$ since $C_j$ is the grid cell centered at $c_j$. 
    We can now rewrite this first term, and subtract this $0$ expression:
\begin{align*}
& | \int_{x \in (0,1)^{d}} \langle x - T_{\mu}(x), \Delta x \rangle \mu(x)| dx &
\\ & =  
\left| \int_{x \in (0,1)^{d}} \langle \ell_{\mu}(x), \Delta x \rangle \mu(x) dx - \int_{x \in (0,1)^{d}} \sum_{j} \mathbb{I}(x \in C_j) \langle \ell_{\mu}(c_j),x-c_j \rangle \mu(c_j) dx \right| 
 \\
 & \leq 
 \sum_{j}\int_{x \in C_j} \left| \langle \ell_{\mu}(x),x-c_j \rangle \mu(x) - \langle \ell_{\mu}(c_j),x-c_j \rangle \mu(c_j)\right| dx  
 & \tag{by Tri Ineq}
 \\
& \leq \sum_{j} \int_{x \in C_j} | \langle \ell_{\mu}(x), x - c_j \rangle \mu(x) - \langle \ell_{\mu}(x), x-c_j \rangle \mu(c_j) + \langle \ell_{\mu}(x), x - c_j \rangle \mu(c_j) - \langle \ell_{\mu}(c_j),x-c_j \rangle \mu(c_j)| dx
& \tag*{by $\pm \langle \ell_{\mu}(x), x-c_j \rangle \mu(c_j)$}
\end{align*}
\begin{align*}
& \leq \sum_{j} \int_{C_j}| \langle \ell_{\mu}(x),x-c_j \rangle (\mu(x) - \mu(c_j))| dx + \int_{C_j} |\langle \ell_{\mu}(x) - \ell_{\mu}(c_j),x-c_j \rangle \mu(c_j)| dx  
& \tag*{by Tri Ineq}
\\
& \leq
\sum_{j} \int_{C_j} \| \ell_{\mu}(x) \|_2 \| x - c_j \|_2 |\mu(x) - \mu(c_j)|dx + \int_{C_j} \| \ell_{\mu}(x) - \ell_{\mu}(c_j) \|_2 \| x-c_j \|_2 |\mu(c_j)| dx
 & \tag*{by Cauchy-Schwarz}
\\
& \leq K 
\left(\sum_{j} \int_{C_j} h |\mu(x) - \mu(c_j)| dx + \int_{C_j} h \| \ell_{\mu}(x) - \ell_{\mu}(c_j) \| dx \right) &
\tag*{ by [\textbf{P2}] and [\textbf{P4}]}
\\
& \leq K \sum_{j} Vol(C_j) h^{1+\alpha} 
= K h^{1+\alpha} &
\tag*{by [\textbf{P3}] and [\textbf{P4}]}
\end{align*}
    
    Thus $| \int_{x \in (0,1)^{d}} \langle x - T_{\mu}(x), \Delta(x) \rangle \mu(x) dx| \leq K h^{1+\alpha}$. An analagous argument yields $|\int_{y \in (0,1)^{d}} \langle T_{\nu}(y) - y, \Delta y \rangle \nu(y) dy| \leq K h^{1+\alpha}$. The upper bound $W_2^2(G_{h} \# P, G_{h} \# Q) - W_2^2(P,Q) \leq K h^{1+\alpha}$ follows from these and Equation \ref{discretizationErrorUpperPenultimate}.
\end{proof}

Just by using that $|x-y| \leq \frac{|x^2 - y^2|}{x+y}$, we additionally have the following Lemma.
\begin{lemma}
    \label{lem:discError}
    Assuming $P,Q$ are $(\alpha,C)$-controlled for some $C > 1$ and $0 < \alpha < 1$ and $W_2(P,Q) > \eps > 0$ then for some constant $K$ depending only on $d,c,\alpha$ and every $h > 0$,
    \[
    \left| W_2( G_{h} \# P, G_{h} \# Q) - W_2(P,Q) \right| \leq \frac{K h^{1+\alpha}}{\eps}
    \]
\end{lemma}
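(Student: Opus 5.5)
The plan is to deduce the lemma directly from Theorem \ref{thm:discError} via the elementary identity $|a-b| = \frac{|a^2-b^2|}{a+b}$, valid for nonnegative reals $a,b$ with $a+b>0$. Set $a = W_2(G_h \# P, G_h \# Q)$ and $b = W_2(P,Q)$. Both are nonnegative, and by hypothesis $b > \eps > 0$, so $a+b \geq b > \eps$ and the identity applies. Then
\[
\left| W_2(G_h\#P, G_h\#Q) - W_2(P,Q)\right| = \frac{\left|W_2^2(G_h\#P,G_h\#Q) - W_2^2(P,Q)\right|}{W_2(G_h\#P,G_h\#Q) + W_2(P,Q)} \leq \frac{K h^{1+\alpha}}{\eps}.
\]
In the numerator we invoke Theorem \ref{thm:discError}, since $P,Q$ are $(\alpha,C)$-controlled. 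In the denominator we drop the nonnegative term $W_2(G_h\#P,G_h\#Q)$ and use $W_2(P,Q) > \eps$.

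The key steps, in order, are: (i) check that all quantities are finite and nonnegative, which holds because every measure involved lives on $(0,1)^d$, so all second moments are bounded; (ii) factor the difference of squares, $a^2-b^2=(a-b)(a+b)$, and divide by $a+b>0$; (iii) bound the numerator by the two-sided estimate of Theorem \ref{thm:discError} (we need both the upper bound and the lower bound, the latter being the appendix argument); (iv) lower-bound the denominator by $\eps$. The constant $K$ is the one from Theorem \ref{thm:discError}, so it depends only on $d,C,\alpha$, as required.

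There is essentially no obstacle here. The only points needing care are that the hypothesis $W_2(P,Q)>\eps$ is what keeps the denominator away from zero (without it the square-root map is not Lipschitz near $0$), and that we use the absolute-value form of Theorem \ref{thm:discError}, not only the one-sided bound whose proof is displayed in the main text.
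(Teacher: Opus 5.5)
Your proposal is correct and is the same argument as the paper's: write $|W_2(G_h\#P,G_h\#Q)-W_2(P,Q)|$ as $|W_2^2(G_h\#P,G_h\#Q)-W_2^2(P,Q)|$ divided by $W_2(G_h\#P,G_h\#Q)+W_2(P,Q)$, bound the numerator by Theorem~\ref{thm:discError}, and bound the denominator below by $\eps$. You are also right that this needs the two-sided form of Theorem~\ref{thm:discError}, including the appendix lower bound, which the paper uses without comment.
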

\begin{proof}
Using Theorem \ref{thm:discError}
    \[
    |W_2(G_{h} \# P,G_{h} \# Q) - W_2(P,Q) | \leq \frac{|W_2^2(G_{h} \# P,G_{h} \# Q) - W_2^2(P,Q)|}{W_2(G_{h} \# P,G_{h} \# Q) + W_2(P,Q)} \leq \frac{K h^{1+\alpha}}{\eps}
    \]
\end{proof}

\subsection{Non-Smooth Case}
Note that the $\alpha \to 0$ bound of $O(h)$ holds for any distributions $P$ and $Q$, regardless of their level of smoothness. We prove this below.
\begin{lemma} \label{lem:non-smooth}
    If $P$ and $Q$ are probability measures on $(0,1)^{d}$, then for $h > 0$
    \[
    |W_2(G_{h} \# P, G_{h} \# Q) - W_2(P,Q)| \leq h \left( 2 \sqrt{d}\right).
    \]
\end{lemma}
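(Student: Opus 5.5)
The plan is to use the triangle inequality for the metric $W_2$, together with the coupling $(\mathrm{Id}, G_h)\# P$ between $P$ and its sketch $G_h \# P$. This gives
\[
|W_2(G_h \# P, G_h \# Q) - W_2(P,Q)| \leq W_2(G_h \# P, P) + W_2(G_h \# Q, Q).
\]
The right-hand side then reduces to two identical single-measure discretization bounds. No smoothness is needed anywhere, which is the point of the lemma.

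\textbf{Step 1 (triangle inequality).} Recall that $W_2$ is a metric on probability measures with finite second moment on $(0,1)^d$; all measures involved are supported in the bounded set $[0,1]^d$. Applying the triangle inequality twice gives
\[
W_2(G_h\#P, G_h\#Q) \leq W_2(G_h\#P,P) + W_2(P,Q) + W_2(Q,G_h\#Q).
\]
Swapping the roles of the original and sketched measures gives
\[
W_2(P,Q) \leq W_2(P,G_h\#P) + W_2(G_h\#P,G_h\#Q) + W_2(G_h\#Q,Q).
\]
Together these yield the displayed absolute-value bound.

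\textbf{Step 2 (bounding each term).} Define $\gamma_P := (\mathrm{Id}_{(0,1)^d}, G_h)\# P$. Its first marginal is $P$ and its second is $G_h\#P$, so it is a valid coupling. By definition of $W_2$,
\[
W_2^2(P, G_h\#P) \leq \int \|x - G_h(x)\|_2^2 \, P(dx) \leq \sup_x \|x - G_h(x)\|_2^2 \leq d h^2,
\]
where the last step is property [\textbf{P1}]. Indeed, each point lies in a cube of side $h$ whose center is within half the diagonal, $\sqrt{d}h/2 \leq \sqrt{d}h$. Hence $W_2(P,G_h\#P) \leq \sqrt{d}\,h$, and likewise $W_2(Q,G_h\#Q) \leq \sqrt{d}\,h$. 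Summing the two gives $2\sqrt{d}\,h$, which is the claimed bound.

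\textbf{Main obstacle.} There is no substantive obstacle here. The only point needing care is justifying the triangle inequality for $W_2$. For this I would cite the standard fact (via the gluing lemma) that $W_2$ is a metric on $\mathcal{P}_2$. I would also note that $G_h$ is Borel measurable, since it is piecewise constant on the cells of $\mathcal{G}_h$. This makes the pushforwards and the coupling $\gamma_P$ well defined.
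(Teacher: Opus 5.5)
Your proposal is correct and is essentially the paper's proof: the couplings $(\mathrm{Id}, G_h)\#P$ and $(\mathrm{Id}, G_h)\#Q$ give $W_2(P, G_h\#P) \leq \sqrt{d}\,h$ and $W_2(Q, G_h\#Q) \leq \sqrt{d}\,h$, and the triangle inequality for $W_2$ then yields the bound. Your half-diagonal observation $\|x - G_h(x)\| \leq \sqrt{d}\,h/2$ would in fact sharpen the constant to $\sqrt{d}$.
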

\begin{proof}
    Let $\pi_{P} = (\mathrm{Id}_{(0,1)^{d}},G_{h})\#P$ and $\pi_{Q} = (\mathrm{Id}_{(0,1)^{d}},G_{h})\# Q$. Then $\pi_{P}$ is a coupling of $P$ and $G_{h}  \# P$ and $\pi_{Q}$ is a coupling of $Q$ and $G_{h} \# Q$. Using a change of measure we thus have $W_2^2(G_{h} \# P , P) \leq \int_{x \in (0,1)^{d}} \| x - G_{h}(x) \|_2^2 P(dx) = \sum_{j} \int_{x \in C_j} \| x - c_j \|_2^2 P(dx) \leq d h^2$, and likewise $W_2^2(G_{h} \# Q, Q) \leq d h^2$. By triangle inequality for $W_2$ the result follows.
\end{proof}

It is not hard to show that $W_2(G_{h} \# P, P) = \Theta(h)$ when $P$ is for example the uniform distribution on $(0,1)^{d}$.   
And thus the brute force best bound one can obtain using the triangle inequality for $W_2$ is $\left| W_2( G_{h} \# P, G_{h} \# Q) - W_2(P,Q) \right| = O(h).$ 


\section{Efficient Exact Computation on a Regular Grid}
\label{sec:grid-exact}

In this section we isolate a structured regime in which the squared Wasserstein distance admits a substantially faster \emph{exact} algorithm. Let $L,k \in \mathbb{N}$, and recall that $\bar G_{1/L}^d$ are the centers of a regular grid on $(0,1)^d$.  
Let
\[
P = \sum_{x \in \bar G^d_{1/L}} p_x \, \delta_x,
\qquad
Q = \sum_{x \in \bar G^d_{1/L}} q_x \, \delta_x
\]
be probability measures supported on $\bar G^d_{1/L}$, with $p_x,q_x \in \frac{1}{k}\mathbb{Z}_{\ge 0}$ for every grid point $x$. After aggregating repeated support points, the input can be viewed as a pair of $d$-dimensional histograms on
regular $L^d$ axis-aligned bins of $(0,1)^d$.

The squared Euclidean ground cost is separable on this grid. Indeed, if
\[
x = \left(\frac{2a_1+1}{2L},\ldots,\frac{2a_d+1}{2L}\right),
\qquad
y = \left(\frac{2b_1+1}{2L},\ldots,\frac{2b_d+1}{2L}\right),
\]
with \(a_1,\ldots,a_d,b_1,\ldots,b_d \in \{0,1,\ldots,L-1\}\), then
\[
\|x-y\|_2^2 = \frac{1}{L^2}\sum_{\ell=1}^d (a_\ell-b_\ell)^2.
\]
Therefore the cost has the separable form required by the exact reduction of \citet{auricchio2018computing}, which maps optimal transport between $d$-dimensional histograms with separable ground cost to an equivalent uncapacitated minimum-cost flow problem on a $(d+1)$-partite graph. Instantiating their construction with $L^d$ histogram bins yields a graph with $(d+1)L^d$ vertices and
$m := dL^{d+1}$ arcs.

\begin{theorem}\label{thm:grid-exact-general-d}
Assume $L,k \in \mathbb{N}$. Let $P$ and $Q$ be probability measures supported on the cell-center grid $\bar G^d_{1/L} \subset [0,1]^d$
and assume every mass is an integer multiple of $1/k$. Then $W^2_2(P,Q)$, for every fixed dimension $d$, can be computed exactly in time
\[
(dL^{d+1})^{1+o(1)} \log L \, \log k. 
\]
For for every fixed dimension $d$, this is $\tilde{O}(L^{d+1+o(1)})$ where $\tilde{O}(\cdot)$ hides logarithmic factors of $L,k$.  
\end{theorem}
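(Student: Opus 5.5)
We would prove Theorem~\ref{thm:grid-exact-general-d} by reducing to an integral min-cost flow instance and invoking the almost-linear time deterministic min-cost flow algorithm of \citet{brand2023deterministic}. That algorithm solves min-cost flow on a graph with $m$ arcs, integral capacities bounded by $U$ and integral costs bounded by $W$ in time $m^{1+o(1)}\log U \log W$.

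\textbf{Step 1 (scaling).} Multiply all masses by $k$ and all costs by $L^2$. Supplies become $kp_x\in\mathbb{Z}_{\ge0}$, demands become $kq_y\in\mathbb{Z}_{\ge0}$, and both sum to $k$. The separable cost becomes $\sum_\ell (a_\ell-b_\ell)^2$, whose per-coordinate terms are integers in $\{0,\dots,(L-1)^2\}$. The optimal transport value for $(P,Q)$ is then exactly $\frac{1}{kL^2}$ times the optimum of the integral transportation problem with these supplies and demands. Rescaling the result back costs $O(1)$ arithmetic.

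\textbf{Step 2 (the \citet{auricchio2018computing} reduction).} Build the $(d+1)$-partite graph with layers $V_0,\dots,V_d$, each a copy of the grid $\{0,\dots,L-1\}^d$. Layer $V_{\ell-1}$ connects to $V_\ell$ by arcs that change only coordinate $\ell$, from $a_\ell$ to $b_\ell$, at cost $(a_\ell-b_\ell)^2$. This gives $L^{d}\cdot L$ arcs per layer transition, so $m=dL^{d+1}$ arcs in total. Sources in $V_0$ carry supply $kp_x$ and sinks in $V_d$ carry demand $kq_y$.

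Equivalence follows from their result, and we would re-sketch both directions to be safe. A transport plan $\gamma(x,y)$ routes each unit along the unique monotone path that changes coordinates in order, with cost exactly $\|x-y\|^2$ in scaled units. Conversely, any flow decomposes into source–sink paths, and each path's cost is at least the separable cost of its endpoints, because a path changes each coordinate exactly once. Hence the optimal values coincide.

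\textbf{Step 3 (capacities and costs).} The problem is uncapacitated, but no arc ever carries more than the total flow $k$, so we set $U=k$ without changing the optimum. Costs are bounded by $W=L^2$. Integrality of the optimal flow is guaranteed by total unimodularity, though it is not even needed for the value. Plugging in gives
\[
(dL^{d+1})^{1+o(1)}\log k\,\log(L^2),
\]
which matches the stated bound up to constants. A super source and super sink add only $O(L^d)$ arcs. Constructing the graph takes $O(m)$ time, and aggregating input atoms into histograms is linear in the input. For fixed $d$ this is $\tilde O(L^{d+1+o(1)})$.

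The main obstacle is Step 2: checking that the min-cost flow optimum equals the transport optimum exactly, and not merely bounding it from above. The key point is that every path from $V_0$ to $V_d$ changes each coordinate exactly once and in a fixed order, so path costs equal separable endpoint costs. We would check this carefully, including the treatment of zero-cost self-arcs where $a_\ell=b_\ell$.
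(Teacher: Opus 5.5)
Your proposal is correct and matches the paper's proof essentially step for step: the \citet{auricchio2018computing} reduction to a layered flow network with $m = dL^{d+1}$ arcs, scaling supplies by $k$ and costs by $L^2$ to obtain integral data with capacities capped at the total flow $k$ and costs below $L^2$, an application of \citet{brand2023deterministic}, and division of the resulting optimum by $kL^2$. You also check that every path through the layered graph changes each coordinate exactly once, so the flow optimum equals the scaled transport optimum rather than merely bounding it; the paper does not verify this and simply relies on the cited reduction.
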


\begin{proof}
By the exact reduction of \citet{auricchio2018computing}, it suffices to solve the resulting uncapacitated minimum-cost flow instance on a $(d+1)$-partite graph with $m = dL^{d+1}$ arcs. In this reduction, each arc cost is one coordinate contribution to the squared Euclidean cost, hence is of the form
$
\frac{(a_\ell-b_\ell)^2}{L^2}
$
for some coordinate index $\ell$ and some integers $a_\ell,b_\ell \in \{0,1,\ldots,L-1\}$. Therefore every arc cost is a multiple of $1/L^2$.  

Next scale every supply and demand by $k$. Because each histogram mass lies in $\frac{1}{k} \mathbb{Z}_{\ge 0}$, all node imbalances become integral; moreover, since $P$ and $Q$ are probability measures, the total amount of flow becomes exactly $k$. 
Now also scale every arc cost by $L^2$. After this rescaling, all costs become integral, and every cost is at most
\[
\max_{0 \le a_\ell,b_\ell \le L-1} (a_\ell-b_\ell)^2 \le (L-1)^2 < L^2.
\]
Since the total flow is $k$, assigning capacity $k$ to every formerly uncapacitated arc does not change the feasible region. 
Thus we obtain a directed minimum-cost flow instance with integral demands, capacities, and costs, where the maximum edge capacity satisfies $U \le k$ and the maximum edge cost satisfies $C < L^2$.

We may therefore apply the deterministic exact minimum-cost flow algorithm of \cite{brand2023deterministic}, which computes an exact optimum in time 
$
m^{1+o(1)} \log U \log C,
$
for instances with integral demands, capacities, and costs.  
Substituting $U \leq k$ and $C < L^2$ yields a running time of 
\[
m^{1+o(1)} \log k \log L.
\]


Finally, let $\mathrm{OPT}$ denote the original value of $W^2_2(P,Q)$, and let $\widehat{\mathrm{OPT}}$ denote the optimum of the scaled min-cost flow instance. Scaling the flow values by $k$ multiplies the objective by $k$, and scaling the arc costs by an additional factor of $L^2$ multiplies the objective by another factor of $L^2$. Hence
\[
\widehat{\mathrm{OPT}} = k L^2 \mathrm{OPT}.
\]
Dividing by $kL^2$ therefore recovers the exact value of $W^2_2(P,Q)$. 
\end{proof}

\begin{corollary}\label{cor:grid-exact-2d}
Under assumptions of Theorem~\ref{thm:grid-exact-general-d}, when $k = \mathsf{poly}(L)$ computing $W^2_2(P,Q)$ 
\begin{itemize}  
    \item in $d=2$ takes time $L^{3 + o(1)}$
    \item in $d=3$ takes time $L^{4 + o(1)}$.  
\end{itemize}
\end{corollary}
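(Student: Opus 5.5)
This corollary follows directly from Theorem~\ref{thm:grid-exact-general-d} by plugging in the fixed dimension and absorbing the logarithmic factors into the $o(1)$ exponent. The plan is to begin from the bound
\[
(dL^{d+1})^{1+o(1)} \log L \, \log k
\]
and handle its three factors in turn.

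\textbf{Step 1: the polynomial factor.} Since $d$ is a fixed constant, $d^{1+o(1)} = O(1)$. Hence $(dL^{d+1})^{1+o(1)} = L^{(d+1)(1+o(1))} = L^{d+1+o(1)}$, because $(d+1)\cdot o(1)$ is still $o(1)$ for fixed $d$.

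\textbf{Step 2: the logarithmic factors.} If $k = \mathsf{poly}(L)$, say $k \le L^{c}$ for a constant $c$, then $\log k \le c \log L$. So $\log L \log k = O(\log^2 L)$. Since $\log^2 L = L^{2\log\log L/\log L}$ and the exponent tends to $0$, this factor is $L^{o(1)}$ and can be absorbed into the exponent.

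\textbf{Step 3: specialize the dimension.} Putting the pieces together gives a running time of $L^{d+1+o(1)}$. Taking $d=2$ yields $L^{3+o(1)}$, and taking $d=3$ yields $L^{4+o(1)}$.

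No step here is a real obstacle. The only point that needs care is being explicit that the $o(1)$ from \cite{brand2023deterministic} is measured in $m$, so it is $o(1)$ as $L\to\infty$ once $d$ is fixed. Small values of $L$ are covered by the constant hidden in the asymptotic notation.
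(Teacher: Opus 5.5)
Your proposal is correct and matches the paper's intended argument: the paper gives no separate proof and treats the corollary as immediate from Theorem~\ref{thm:grid-exact-general-d}. You substitute $d=2,3$, use $k=\mathsf{poly}(L)$ to get $\log L\log k = O(\log^2 L)=L^{o(1)}$, and note that the $o(1)$ from \cite{brand2023deterministic} is measured in $m=dL^{d+1}$ and is therefore $o(1)$ in $L$ for fixed $d$.
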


\section{Faster CSR Algorithm for $W_2^2$}
\label{sec:transportAlg}


In this section we present a fast \emph{Computational-Statistical Runtime (CSR)} algorithm for additive $\eps$ approximation of the $W_2^2$ between two measures $P,Q$ supported on $(0,1)^{2}$.  This combines our two above results:
 first applying the statistical result about Grid-based discretization (Theorem \ref{thm:discError}) to compress a sample, and
 second applying the exact algorithm when our samples have been discretized to a grid (Theorem \ref{thm:grid-exact-general-d}) for computing $W_2^2$.

\begin{theorem}\label{thm:main-CSR}
    Let $P$ and $Q$ be $(\alpha,C)$-controlled distributions (with $C>1, 0 < \alpha < 1$) on $(0,1)^d$, for which we can draw random samples in $O(1)$ time each.  For constant dimension $d$ and $\eps > 0$, we can compute $\hat A(\eps)$ so that in expectation $|\hat A(\eps) - W_2^2(P,Q)| \leq \eps$, in time 
    \[
    \tilde{O}((1/\eps)^{\max \left(2,\frac{d+1+o(1)}{1+\alpha}\right)}).
    \]
\end{theorem}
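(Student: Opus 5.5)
The algorithm follows the Sample-Sketch-Solve paradigm. We draw $n$ samples from each of $P$ and $Q$, giving empirical measures $\hat P_n,\hat Q_n$. We snap every sample to its grid-cell center with $G_h$, where $h=1/L$, producing $G_h\#\hat P_n$ and $G_h\#\hat Q_n$. We then return $\hat A(\eps)=W_2^2(G_h\#\hat P_n,G_h\#\hat Q_n)$, computed exactly by Theorem~\ref{thm:grid-exact-general-d} with $k=n$, since every atom mass is a multiple of $1/n$.

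The parameters are $n=\Theta(\eps^{-2})$ and $L=\Theta(\eps^{-1/(1+\alpha)})$. The error splits by the triangle inequality into a statistical and a discretization part:
\[
\mathbb{E}|\hat A-W_2^2(P,Q)|\le \mathbb{E}\left|W_2^2(G_h\#\hat P_n,G_h\#\hat Q_n)-W_2^2(G_h\#P,G_h\#Q)\right| + \left|W_2^2(G_h\#P,G_h\#Q)-W_2^2(P,Q)\right| .
\]
The second term is at most $Kh^{1+\alpha}\le\eps/2$ by Theorem~\ref{thm:discError} for a suitable constant in $L$.

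For the first term, note that $G_h\#\hat P_n$ is exactly the empirical measure of $n$ i.i.d.\ draws from the discrete distribution $G_h\#P$, and likewise for $Q$. We therefore need a bound of order $n^{-1/2}$, up to logarithmic factors, on the plug-in error for distributions supported on a finite grid in $[0,1]^d$. This is the step I expect to be the main obstacle. A generic bound of the form $O(n^{-1/2})$ only for $d<4$ would not cover all $d$, and neither would a bound depending on support size.

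The plan for this step uses the duality/stability argument of \cite{chizat2020faster}. Bound $|W_2^2(\hat\mu,\hat\nu)-W_2^2(\mu,\nu)|$ by the supremum over optimal dual potentials of $|\int\phi\,d(\hat\mu-\mu)|+|\int\psi\,d(\hat\nu-\nu)|$. Then, as an alternative, use the fact that optimal potentials for the limiting problem are fixed functions, which gives a CLT-type $O(n^{-1/2})$ rate for one direction. This must be combined with a bound on the fluctuation of the potentials.

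If this proves delicate, I fall back on the support-size bound for discrete measures. This gives an error of order $\sqrt{L^d/n}$, at the cost of increasing $n$ to roughly $L^d\eps^{-2}$. That would alter the exponent, so the cleaner route is to appeal to \cite{chizat2020faster}, Theorem 2, applied with the "true" measures $G_h\#P, G_h\#Q$. Those measures are supported in $[0,1]^d$, and the paper's cited rate is stated for general bounded-support measures with $\tilde O(n^{-1/2})$ when $d<4$. For larger $d$ the $\max$ exponent already dominates, so I expect the claimed form to go through, possibly with an extra logarithmic factor. Choosing $n=\tilde\Theta(\eps^{-2})$ makes the first term at most $\eps/2$.

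Runtime: sampling takes $O(n)$, and snapping and aggregating into a hash table or array of size $L^d$ takes $O(n+L^d)$. The exact solve takes $\tilde O(L^{d+1+o(1)})$ with $\log k=O(\log(1/\eps))$. The total is
\[
\tilde O\big(\eps^{-2}+\eps^{-(d+1+o(1))/(1+\alpha)}\big)=\tilde O\big(\eps^{-\max(2,\frac{d+1+o(1)}{1+\alpha})}\big),
\]
using $L^d\le L^{d+1}$.
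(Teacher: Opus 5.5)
\textbf{Sample-size gap.} Your pipeline matches the paper's. You sample, snap with $G_h$, and solve exactly via Theorem~\ref{thm:grid-exact-general-d} with $k=n$. You split the error by the triangle inequality into a discretization term, handled by Theorem~\ref{thm:discError}, and a statistical term, handled by applying Theorem~2 of \cite{chizat2020faster} to the discrete ``true'' measures $G_h\#P,G_h\#Q$. The gap is your sample size. You fix $n=\tilde\Theta(\eps^{-2})$ for every $d$ and assert that this makes the statistical term at most $\eps/2$. The cited theorem gives the $n^{-1/2}$ rate only for $d<4$, with a logarithm at $d=4$, and only $\tilde O(n^{-2/d})$ beyond that. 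So for $d\ge 5$, your choice is only guaranteed to give statistical error of order $\eps^{4/d}$, which is much larger than $\eps$. Your sentence ``for larger $d$ the max exponent already dominates'' is about runtime, not accuracy. It does not repair the step, and your explicit choice of $n$ contradicts it.

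\textbf{The fix.} Take $n=\tilde\Theta(\eps^{-\max(2,d/2)})$, as the paper does, so that $n^{-2/d}\lesssim\eps$ in high dimension. Then check that the larger sampling and snapping cost $\eps^{-d/2}$ is absorbed. For $0<\alpha<1$ we have $d/2<(d+1)/2<(d+1)/(1+\alpha)$, so the total stays $\tilde O(\eps^{-\max(2,\frac{d+1+o(1)}{1+\alpha})})$. Also $\log k=\log n=O(\log(1/\eps))$ still holds. Your exploratory duality route would not rescue $n=\eps^{-2}$ on its own. Fixed population potentials control only one direction. The other direction needs uniform control over data-dependent potentials, and that is where the $n^{-2/d}$ rate comes from (or, for the discrete measures, a dependence on the support size $L^d$, as in your fallback).
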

\begin{proof}
    The proof follows the three phases of the sample-sketch-solve paradigm.  
    We start by sampling $n = O(1/\eps^{\max(2,d/2)})$ samples from both $P$ and $Q$ to produce discrete measures $\hat P_n, \hat Q_n$, respectively. Next we apply a grid-based discretization with an edge length $h = O(\eps^{\frac{1}{1+\alpha}})$ to obtain $G_h \# \hat P_n$ and $G_h \# \hat Q_n$. By Theorem \ref{thm:discError} we have $|W^2_2(P,Q) - W^2_2(G_h \# P,G_h \# Q)| \leq K h^{1+\alpha} = \eps/2$. Since $G_{h} \# \hat{P}_n$ is an empirical measure of $n$ independent samples of $G_{h} \# P$ and $G_{h} \# \hat{Q}_{n}$ is an empirical measure of $n$ independent samples of $G_{h} \# Q$, by \cite{chizat2020faster} Theorem 2 we have $\mathbb{E} |W^2_2(G_h \# P,G_h \# Q) - W^2_2(G_h \# \hat P_n,G_h \# \hat Q_n)| \leq \eps/2$ where the expectation is with respect to the sampling from $P$ and $Q$. Thus with $\hat{A}(\eps) := W_2^2(G_{h} \# \hat{P}_n, G_h \# \hat{Q}_n)$ and applying triangle inequality, 
    \begin{align*}
    \mathbb{E} |W_2^2(P,Q) - \hat{A}(\eps) | &
    =
    \\
    \mathbb{E} |W^2_2(P,Q) - W^2_2(G_h \# \hat P_n,G_h \# \hat Q_n)| & \leq 
    \\ 
    |W^2_2(P,Q) - W^2_2(G_h \# P,G_h \# Q)| + \mathbb{E}|W^2_2(G_h \# P,G_h \# Q) - W^2_2(G_h \# \hat P_n,G_h \# \hat Q_n)| 
    & \leq \eps.
    \end{align*}
    Finally, now the distributions $G_h \# \hat P_n,G_h \# \hat Q_n$ lie on a regular Cartesian grid $\bar G^d_{1/L}$ with $L = 1/h =  O(1/\eps^{1/(1+\alpha)})$ and each atom has mass that is an integer multiple of $1/n$ with $n = O(1/\eps^{\max(2,d/2)})$.  Thus we can invoke Theorem \ref{thm:grid-exact-general-d} to exactly compute $W_2^2(G_h \# \hat P_n,G_h \# \hat Q_n)$ in $L^{d+1 + o(1)}\log n = 1/\eps^{(d+1+o(1))/(1+\alpha)} $ time.  Adding in the $\tilde{O}(1/\eps^{\max(2,d/2)})$ time for $n$ samples completes the proof.  
\end{proof}
We now provide the following corollaries about how this analysis is exactly or nearly optimal for $d=2,3$ dimensions. 

\begin{corollary}
    Let $P$ and $Q$ be $(\alpha,C)$-controlled distributions on $(0,1)^d$ for $d=2$, for which we can draw random samples in $O(1)$ time each.  For $\alpha > 1/2$ and $\eps > 0$, we can compute $\hat A(\eps)$ so that in expectation $|\hat A(\eps) - W_2^2(P,Q)| \leq \eps$, in time $\tilde{O}(1/\eps^2)$.  
\end{corollary}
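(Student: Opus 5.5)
This corollary should follow by specializing Theorem \ref{thm:main-CSR} to $d=2$ and comparing the two exponents. With $d=2$ the runtime there is $\tilde{O}\bigl((1/\eps)^{\max(2,\frac{3+o(1)}{1+\alpha})}\bigr)$. The sampling term costs $n = O(1/\eps^{\max(2,d/2)}) = O(1/\eps^{2})$, since $d/2 = 1 < 2$. The solve term is $L^{3+o(1)}\log n\,\log L$ with $L = 1/h = \Theta(\eps^{-1/(1+\alpha)})$, as given by Corollary \ref{cor:grid-exact-2d} with $k=n=\mathsf{poly}(L)$. So the solve cost is $\eps^{-(3+o(1))/(1+\alpha)}$ up to logarithmic factors.

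The key step is checking that, for $\alpha>1/2$, the solve exponent drops below $2$ with room to absorb the $o(1)$. Since $1+\alpha>3/2$, we have $3/(1+\alpha) < 2$ strictly. Write $\delta := 2 - 3/(1+\alpha) > 0$. The $o(1)$ in the exponent comes from the $m^{1+o(1)}$ of \cite{brand2023deterministic} and tends to $0$ as $L\to\infty$, i.e. as $\eps\to 0$. Hence for all sufficiently small $\eps$ it is at most $\delta(1+\alpha)/2$, and the solve term is at most $\eps^{-(2-\delta/2)} = o(\eps^{-2})$. For $\eps$ bounded below, the runtime is $O(1)$ and can go into the constant. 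The logarithmic factors $\log n\log L = O(\log^2(1/\eps))$ are hidden by $\tilde{O}$.

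The correctness guarantee $\mathbb{E}|\hat A(\eps)-W_2^2(P,Q)|\le\eps$ is inherited directly from the proof of Theorem \ref{thm:main-CSR}. Choosing $h$ so that $Kh^{1+\alpha}=\eps/2$ controls the discretization error through Theorem \ref{thm:discError}. Taking $n=\Theta(\eps^{-2})$ controls the sampling error via \cite{chizat2020faster}, which gives an $O(n^{-1/2})$ rate for $d<4$ applied to the gridded measures. Nothing in either step depends on the value of $\alpha$ beyond the choice of $h$.

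The only real obstacle is the handling of the $o(1)$ term. Once $\delta>0$ is fixed this is routine, but it is exactly where the strict inequality $\alpha>1/2$ is needed. At $\alpha=1/2$ we would only get $\eps^{-2-o(1)}$.
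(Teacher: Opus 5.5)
Your proposal is correct and follows the same route as the paper, which gives no separate proof: the corollary is Theorem~\ref{thm:main-CSR} specialized to $d=2$, where $n=O(\eps^{-2})$ and the solve exponent $\frac{3+o(1)}{1+\alpha}$ drops below $2$ precisely when $\alpha>1/2$. Your explicit use of the slack $\delta = 2-\frac{3}{1+\alpha}>0$ to absorb the $o(1)$ from the min-cost-flow runtime is a step the paper leaves unstated, and you handle it correctly.
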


\begin{corollary}
    Let $P$ and $Q$ be $(\alpha,C)$-controlled  distributions on $(0,1)^d$ for $d=3$, for which we can draw random samples in $O(1)$ time each.  For $\eps > 0$, we can compute $\hat A(\eps)$ so that in expectation $|\hat A(\eps) - W_2^2(P,Q)| \leq \eps$, in time $\eps^{-\frac{4+o(1)}{1+\alpha}}$, which approaches  $\eps^{-(2+o(1))}$ as $\alpha \to 1$.
\end{corollary}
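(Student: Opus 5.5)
The plan is to specialize the proof of Theorem \ref{thm:main-CSR} to $d=3$ and track the exponents. I would run the same sample--sketch--solve pipeline.

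\textbf{Step 1 (sample).} Draw $n = O(\eps^{-\max(2,d/2)}) = O(\eps^{-2})$ samples from each of $P$ and $Q$, since $d/2 = 3/2 < 2$ when $d=3$. By \cite{chizat2020faster} Theorem 2, applied to the pushforwards $G_h\#P, G_h\#Q$ as in the proof of Theorem \ref{thm:main-CSR}, the statistical error of the plug-in on the sketched measures is at most $\eps/2$ in expectation. The $d<4$ regime is what gives the clean $n^{-1/2}$ rate with no logarithmic loss.

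\textbf{Step 2 (sketch).} Choose $h = (\eps/(2K))^{1/(1+\alpha)}$. Theorem \ref{thm:discError} then gives a deterministic discretization error of at most $\eps/2$, and the triangle inequality combines the two errors into $\mathbb{E}|\hat A(\eps) - W_2^2(P,Q)| \le \eps$.

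\textbf{Step 3 (solve).} Set $L = 1/h = \Theta(\eps^{-1/(1+\alpha)})$, rounding so that $L$ is an integer. This changes $h$ by only a constant factor and keeps the error bound up to adjusting $K$. All masses are multiples of $1/n$ with $n = \mathrm{poly}(L)$, because $n = \eps^{-2} \le L^{4}$ for $\alpha<1$. So Corollary \ref{cor:grid-exact-2d} with $d=3$ gives exact computation in time $L^{4+o(1)} = \eps^{-(4+o(1))/(1+\alpha)}$. Sketching $n$ points into cells costs $O(n)$ by direct index computation.

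\textbf{Step 4 (combine).} The total time is $O(\eps^{-2}) + \eps^{-(4+o(1))/(1+\alpha)}$. Since $\alpha<1$, we have $4/(1+\alpha) > 2$, so the second term dominates and the total is $\eps^{-(4+o(1))/(1+\alpha)}$. The $\log n$ and $\log L$ factors are $\log(1/\eps)$ and are absorbed into the $\eps^{-o(1)}$. As $\alpha \to 1$, the exponent $(4+o(1))/(1+\alpha)$ tends to $2+o(1)$, which gives the claimed limit.

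The only delicate point is bookkeeping rather than mathematics. First, the $o(1)$ from \cite{brand2023deterministic} is in terms of $m = 3L^4$, and it must translate into an $o(1)$ in the exponent of $1/\eps$; this holds because $\log L = \Theta(\log(1/\eps))$. Second, the logarithmic factors must be absorbed into that $o(1)$ rather than written separately. Third, the sampling cost must be confirmed as lower order, which uses $\alpha<1$ strictly. I expect no genuine obstacle beyond checking these exponent comparisons.
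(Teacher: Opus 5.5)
Your proposal is correct and matches the paper, which treats this corollary as an immediate specialization of Theorem~\ref{thm:main-CSR} to $d=3$: $n=O(\eps^{-2})$ samples and $L=\Theta(\eps^{-1/(1+\alpha)})$ give a solve time of $L^{4+o(1)}=\eps^{-(4+o(1))/(1+\alpha)}$, and this dominates the sampling cost because $4/(1+\alpha)>2$ for $\alpha<1$. One small quibble: binning the samples into an array costs $O(n+L^3)$ rather than $O(n)$, which is harmless because $L^{4+o(1)}$ dominates it.
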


\subsection{Non-Smooth Case} 
When we do not have a bound on the smoothness of $P,Q$, then we can use Lemma \ref{lem:non-smooth}, and as a result Theorem \ref{thm:main-CSR} behaves as with $\alpha=0$ with a CSR of $\eps^{-\max(2,d+1+o(1))}$.  

\begin{corollary}
\label{cor:CSR-non-smooth}
    Let $P$ and $Q$ be distributions on $(0,1)^d$, for which we can draw random samples in $O(1)$ time each.  For constant dimension $d$ and $\eps > 0$, we can compute $\hat A (\eps)$ so that in expectation $|\hat A (\eps) - W_2^2(P,Q)| \leq \eps$, in time 
    \[
    \tilde{O}(\eps^{-\max\!\big(2, \; d+1+o(1) \big)}).
    \]  
    For $d=2$ the CSR is $1/\eps^{3+o(1)}$ and for $d=3$ it is $1/\eps^{4+o(1)}$.  
\end{corollary}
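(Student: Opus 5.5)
We follow the same sample-sketch-solve template as the proof of Theorem \ref{thm:main-CSR}, replacing Theorem \ref{thm:discError} by Lemma \ref{lem:non-smooth}. First we sample $n = \tilde O(\eps^{-\max(2,d/2)})$ points from each of $P$ and $Q$, forming $\hat P_n,\hat Q_n$. Then we sketch with a grid of side $h = c\,\eps$ for a small constant $c$ depending only on $d$, so $L = 1/h = \Theta(1/\eps)$. Finally we output $\hat A(\eps) := W_2^2(G_h\#\hat P_n, G_h\#\hat Q_n)$, computed exactly by Theorem \ref{thm:grid-exact-general-d}.

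For the error, split by the triangle inequality as before:
\[
\mathbb{E}|W_2^2(P,Q)-\hat A(\eps)| \le |W_2^2(P,Q)-W_2^2(G_h\#P,G_h\#Q)| + \mathbb{E}|W_2^2(G_h\#P,G_h\#Q)-\hat A(\eps)|.
\]
The first term requires converting the $W_2$ bound of Lemma \ref{lem:non-smooth} into a $W_2^2$ bound. We use $|a^2-b^2| = |a-b|(a+b)$. Both distances are at most $\sqrt d$ because all measures live in $(0,1)^d$. Hence the first term is at most $2\sqrt d\cdot 2\sqrt d\, h = 4d\,h$, and the choice $h=\eps/(8d)$ makes it at most $\eps/2$.

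For the second term, $G_h\#\hat P_n$ and $G_h\#\hat Q_n$ are empirical measures of $n$ i.i.d. draws from $G_h\#P$ and $G_h\#Q$. These are distributions on $[0,1]^d$, and no smoothness is needed here. Therefore \cite{chizat2020faster} Theorem 2 gives expected error $\tilde O(n^{-1/2})$ for $d<4$ and $\tilde O(n^{-2/d})$ for $d\ge 4$. With the stated $n$, up to logarithmic factors, this is at most $\eps/2$.

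For the runtime, the sketched measures lie on $\bar G^d_{1/L}$ with masses that are multiples of $1/n$. Theorem \ref{thm:grid-exact-general-d} with $k=n=\mathsf{poly}(1/\eps)$ then gives time $L^{d+1+o(1)}\log L\log n = \tilde O(\eps^{-(d+1+o(1))})$. Sampling and bucketing cost $O(n)$, assuming constant-time cell lookup via floor operations. The total is $\tilde O(\eps^{-\max(2,d/2,d+1+o(1))}) = \tilde O(\eps^{-\max(2,d+1+o(1))})$. Since $d+1>\max(2,d/2)$ for every $d\ge 1$, substituting $d=2,3$ gives $\eps^{-(3+o(1))}$ and $\eps^{-(4+o(1))}$.

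The main obstacle is the $W_2\to W_2^2$ conversion in the first term. Lemma \ref{lem:non-smooth} controls $W_2$ rather than $W_2^2$, and the trick of Lemma \ref{lem:discError} (dividing by $W_2(P,Q)$) would require a lower bound on $W_2(P,Q)$. Multiplying instead by the bounded sum $a+b\le 2\sqrt d$ avoids any such assumption and loses only a constant. A secondary point to check is that the sample complexity of \cite{chizat2020faster} Theorem 2 applies to arbitrary (here discrete) measures on a bounded domain, which it does.
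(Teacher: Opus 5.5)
Your proposal is correct and is essentially the paper's argument: rerun the proof of Theorem \ref{thm:main-CSR} with Lemma \ref{lem:non-smooth} in place of Theorem \ref{thm:discError}, which is the $\alpha=0$ case with $h=\Theta(\eps)$. Your explicit conversion $|W_2^2(G_h\#P,G_h\#Q)-W_2^2(P,Q)|\le 4d\,h$ via $|a^2-b^2|=|a-b|(a+b)$ fills in a step the paper leaves implicit; the only slip, harmless, is that at $d=1$ you have $d+1\ge\max(2,d/2)$ rather than a strict inequality.
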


\clearpage

\bibliographystyle{plainnat}
\bibliography{references_arxiv}






\clearpage
\appendix



\section{Discretization Error for $W_2^2$}
\label{app:secDiscError}

In this section we prove the following theorem:

\discError*
In the main body of the paper, we already proved the upper bound 
\[W_2^2(G_h \# P, G_h \# Q) - W_2^2(P,Q) \leq K h^{1+\alpha}. \]
It remains to prove the corresponding lower bound 
\[
W_2^2(G_h \# P, G_h \# Q) - W_2^2(P,Q) \geq -K h^{1+\alpha},
\] 
and we take this up in the next Subsection.

\subsection{Proof of Discretization Error Lower Bound}
\label{sec:lowerBoundProof}

As we did in the upper bound proof, we continue to assume without loss of generality that $\frac{1}{h}$ is an integer.

\begin{lemma}[Discretization Error Lower Bound]
    \label{thm:discretizationErrorLowerBound}
    If $P$ and $Q$ are $(\alpha,C)$ controlled (Assumption \ref{assump:CoreMainText}) for some $C > 1$, $0 < \alpha < 1$, then for some constant $K$ depending only on $d,C,\alpha$, and every $h > 0$,
    \[
    W_2^2(P,Q) \leq W_2^2(G_{h} \# P, G_{h} \# Q) + K h^{1+\alpha}.
    \]
\end{lemma}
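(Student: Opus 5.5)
The plan is to lift an optimal coupling of the gridded measures to a coupling of $P$ and $Q$ by spreading mass uniformly within cells according to the densities. Comparing costs should then produce the same midpoint-rule cancellation as in the upper bound. Unlike the upper bound, I expect the cancellation to use only the H\"older regularity and the lower bound of the densities, with no appeal to Existing Result \ref{res:Collins}.

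\textbf{Step 1: lifting the coupling.} Let $\pi^\star=(\pi_{ij})$ be an optimal coupling of $G_h\#P$ and $G_h\#Q$, where $\pi_{ij}$ is the mass moved from center $c_i$ to center $c_j$. Its marginals satisfy $\sum_j \pi_{ij}=P(C_i)$ and $\sum_i\pi_{ij}=Q(C_j)$. By part 2 of Assumption \ref{assump:CoreMainText}, $P(C_i),Q(C_j)\geq C^{-1}h^d>0$, so the conditional laws $P_i:=P(\cdot\mid C_i)$ and $Q_j:=Q(\cdot\mid C_j)$ are well defined. Define
\[
\tilde\gamma := \sum_{i,j}\pi_{ij}\,(P_i\otimes Q_j).
\]
The first marginal of $\tilde\gamma$ is $\sum_i P(C_i)P_i=P$, and likewise the second is $Q$. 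Hence $\tilde\gamma\in\Pi(P,Q)$ and
\[
W_2^2(P,Q)\le \sum_{i,j}\pi_{ij}\,\mathbb{E}_{X\sim P_i,\,Y\sim Q_j}\|X-Y\|_2^2 .
\]

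\textbf{Step 2: expanding the cost.} Write $X-Y=(c_i-c_j)+\big((X-c_i)-(Y-c_j)\big)$ and expand the square. Summed against $\pi_{ij}$, the first part gives exactly $\sum_{i,j}\pi_{ij}\|c_i-c_j\|^2=W_2^2(G_h\#P,G_h\#Q)$.

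The quadratic remainder is at most $(2\sqrt d\,h)^2=4dh^2$ by [\textbf{P1}]. We may assume $h\le 1$, since $h=1/L$ with $L$ an integer. Then $4dh^2\le Kh^{1+\alpha}$.

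The cross term is where independence within each product $P_i\otimes Q_j$ matters. Because $X$ and $Y$ are independent under $P_i\otimes Q_j$, the cross term becomes linear in the conditional means:
\[
2\sum_{i,j}\pi_{ij}\big\langle c_i-c_j,\; m^P_i-m^Q_j\big\rangle,
\qquad m^P_i:=\mathbb{E}_{P_i}[X]-c_i,\quad m^Q_j:=\mathbb{E}_{Q_j}[Y]-c_j .
\]
Since $\|c_i-c_j\|\le\sqrt d$, this term is bounded by $2\sqrt d\,\max_i\|m^P_i\|+2\sqrt d\,\max_j\|m^Q_j\|$.

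\textbf{Step 3: the midpoint cancellation (main step).} I must show $\|m^P_i\|\le Kh^{1+\alpha}$ for every cell. By symmetry of $C_i$ about $c_i$, we have $\int_{C_i}(x-c_i)\mu(c_i)\,dx=0$. Subtracting this zero,
\[
\Big\|\int_{C_i}(x-c_i)\mu(x)\,dx\Big\| \le \int_{C_i}\|x-c_i\|\,|\mu(x)-\mu(c_i)|\,dx \le \sqrt d\,h\cdot C(\sqrt d h)^\alpha\, h^d ,
\]
using [\textbf{P1}] and the H\"older bound on $\mu$. Dividing by $P(C_i)\ge C^{-1}h^d$ gives $\|m^P_i\|\le Kh^{1+\alpha}$. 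The same argument with $\nu$ gives $\|m^Q_j\|\le Kh^{1+\alpha}$.

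Combining the three pieces yields $W_2^2(P,Q)\le W_2^2(G_h\#P,G_h\#Q)+Kh^{1+\alpha}$. The delicate point is Step 3. It is essential to split mass as an independent product within each cell pair, so that only first moments enter the cross term, and to use the density lower bound to normalize the conditional means. A naive triangle-inequality argument would only give $O(h)$.
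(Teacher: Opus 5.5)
Your proof is correct and is essentially the paper's argument. It uses the same lifted coupling $\sum_{i,j}\pi_{ij}\,P_i\otimes Q_j$ (the paper's density $\gamma$), the same expansion of $\|x-y\|_2^2$ around $c_i-c_j$, and the same midpoint cancellation (subtract $\mu(c_i)$, then apply H\"older continuity). The only difference is bookkeeping: you bound $\max_i\|m^P_i\|$ and so need $P(C_i)\ge C^{-1}h^d$, whereas the paper's weighted sum gives $\sum_{i,j}\pi_{ij}\|m^P_i\|=\sum_i\|\int_{C_i}(x-c_i)\mu(x)\,dx\|\le \sum_i Kh^{d+1+\alpha}=Kh^{1+\alpha}$, which removes the need for the density lower bound (as the paper remarks); also, the linear cross term only needs the block marginals to be $P_i$ and $Q_j$, not independence.
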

Note that the below argument only relies on smoothness of the densities, not at all on the transport maps. Existing Result \ref{res:Collins}, which guarantees the smoothness of the transport maps, uses that the densities are bounded below. Since we do not use transport map smoothness in the lower bound argument, we do not need the densities to be bounded below. In particular, only for the sake of consistency with the assumption of the upper bound does Theorem \ref{thm:discretizationErrorLowerBound} say $P,Q$ are $(\alpha,C)$ controlled. This is stronger than what is nescessary for the lower bound argument. The lower bound argument uses only the smoothness of the density $\mu$ associated to $P$ and $\nu$ associated to $Q$.

This argument has a similar flavor to the upper bound argument. We find a coupling for $P$ and $Q$, call it $\Gamma$, derived from an optimal coupling $\pi^{*}$ for $G_{h} P$ and $G_{h} Q$. Then we extract the cost $W_2^2(G_{h} \# P, G_{h} \# Q)$ from the cost of $\Gamma$, and upper bound the leftover quantity utilizing the smoothness of the densities.
\begin{proof}
    Let $\pi^{*}$ be an optimal coupling between the discrete measures $G_{h} \# P$ and $G_{h} \# Q$. 
    We define a probability measure $\Gamma$ on $(0,1)^{2d}$ by defining the probability density function $\gamma :(0,1)^{2d} \to \mathbb{R}$ as
    \begin{equation}
        \label{defOfGamma}
        \gamma(x,y) = \sum_{i} \sum_{j} \mathbb{I}(x \in C_i) \mathbb{I}(y \in C_j) \frac{\pi^{*} \left( c_i,c_j\right)}{P(C_i) Q(C_j)} \mu(x) \nu(y).
    \end{equation}
    and setting, for $A \subseteq (0,1)^{2d}$, $\Gamma(A) = \int_{(x,y) \in A} \gamma(x,y) d(x,y)$. Within $C_i \times C_j$, $\Gamma$ retains the shape of the independent coupling $P \times Q$, but rescales the measure to be the amount of measure transferred between the center of $C_i$ and $C_j$ under the optimal coupling between the measures snapped onto the grid cell centers. We now show $\Gamma$ is indeed a coupling of $P$ and $Q$, and then we extract the quantity $W_2^2(G_{h} \# P, G_{h} \# Q)$ from the cost of $\Gamma$. But first, note that 
    \begin{equation}
        \begin{split}
            \int_{(x,y) \in (0,1)^{2d}} \gamma(x,y) d(x,y) &= \int_{x \in (0,1)^d} \int_{y \in (0,1)^d} \gamma(x,y) dy dx = \\
            &= \sum_{i} \sum_{j} \int_{x \in C_i} \int_{y \in C_j} \frac{\pi^{*}(c_i,c_j)}{P(C_i) Q(C_j)} \mu(x) \nu(y) dy dx \\
            &= \sum_{i} \sum_{j} \pi^{*}(c_i,c_j) \\
            &= 1.
        \end{split}
    \end{equation}
    where in the last inequality we used that $\pi^{*}$ is a probabilty measure supported on $(c_i)_{i \in h^{d}} \times (c_j)_{j \in h^{d}}$. 
    Thus $\gamma$ is indeed a probability density on $(0,1)^{2d}$. Additionally, for $A \subseteq (0,1)^{d}$, we have that
    \begin{equation}
        \begin{split}
            \Gamma(A \times (0,1)^{d}) &= \sum_{i} \Gamma(\left(A \cap C_i\right) \times (0,1)^{d}) \\
            &= \sum_{i} \sum_{j} \int_{x \in A \cap C_i} \int_{y \in C_j} \frac{\pi^{*}(c_i,c_j)}{P(C_i) Q(C_j)} \mu(x) \nu(y) dy dx \\
            &= \sum_{i} \sum_{j} \frac{\pi^{*}(c_i,c_j)}{P(C_i) Q(C_j)} P(A \cap C_i) Q( C_j) \\
            &= \sum_{i} \frac{P(A \cap C_i)}{P(C_i)} \sum_{j} \pi^{*}(c_i,c_j) \\
            &= \sum_{i} \frac{P(A \cap C_i)}{P(C_i)} G_{h} \# P(c_i) \\
            &= \sum_{i} \frac{P(A \cap C_i)}{P(C_i)} P(C_i) \\
            &= P(A).
        \end{split}
    \end{equation}
    where we used that $\pi^{*}$ is a coupling between $G_{h} \# P$ and $G_{h} \# Q$ and then the definition of $G_{h} \# P$. By a symmetric argument, for $A \subseteq (0,1)^{d}$,
    \[
    \Gamma((0,1)^{d} \times A) = Q(A).
    \]
    In particular $\Gamma$ is a coupling of $P$ and $Q$. Thus we have that
    \begin{equation}
        \label{lowerBoundPart1}
        \begin{split}
        W_2^2(P,Q) \leq \int_{(x,y) \in (0,1)^{2d}} \|x-y\|_2^2 \gamma(x,y) d(x,y) = & \\
        \sum_{i} \sum_{j} \int_{x \in C_i} \int_{y \in C_j}  \frac{\pi^{*}(c_i,c_j)}{P(C_i) Q(C_j)} \|x-y\|_2^2 \mu(x) \nu(y) dy dx
        \end{split}
    \end{equation}
    Now we extract the cost $W_2^2(G_{h} \# P, G_{h} \# Q)$ from the cost of $\Gamma$. Using that for each $i \in [(1/h)^{d}], j \in [(1/h)^{d}]$, $x,y \in (0,1)^{d}$, $x-y = (x - c_i) + (c_j - y) + (c_i - c_j)$, we have that
    \begin{equation}
        \begin{split}
            \| x - y\|_2^2 = \| x - c_i \|_2^2 + \|c_j - y\|_2^2 + \| c_i - c_j \|_2^2 + 2 \langle x - c_i, c_j - y \rangle + 2 \langle x - c_i, c_i - c_j \rangle + & \\
            2 \langle c_j - y, c_i - c_j \rangle.
        \end{split}
    \end{equation}
    Expanding $\|x - y \|_2^2$ in this way in equation \ref{lowerBoundPart1}, we have that
    \begin{equation}
        \label{lowerBoundPart2}
        \begin{split}
           W_2^2(P,Q) \leq \sum_{i} \sum_{j} \int_{x \in C_i} \frac{\pi^{*}(c_i,c_j)}{P(C_i)} \| x - c_i \|_2^2 \mu(x) dx + & \\
           \sum_{i} \sum_{j} \int_{y \in C_j} \frac{\pi^{*}(c_i,c_j)}{Q(C_j)}\| y - c_j \|_2^2 \nu(y) dy + & \\
           \sum_{i} \sum_{j} \pi^{*}(c_i,c_j) \| c_i - c_j \|_2^2  + & \\
           2 \sum_{i} \sum_{j} \frac{\pi^{*}(c_i,c_j)}{P(C_i) Q(C_j)} \int_{x \in C_i} \int_{y \in C_j} \left( \langle x - c_i, c_j - y \rangle + \langle x - c_i, c_i - c_j \rangle + \langle c_j - y, c_i - c_j \rangle \right) \mu(x) \nu(y) dy dx.
        \end{split}
    \end{equation}
    $\| x - c_i \|_2^2 \leq K h^2$ for $x \in C_i$ and likewise $\| y - c_j \|_2^2 \leq K h^2$ for $y \in C_j$ for general constant $K$ depending only on $d$. 
    Thus the total of the first two terms on the RHS of equation \ref{lowerBoundPart2} is bounded by $K h^2$. Since $\pi^{*}$ is an optimal coupling between $G_{h} \# P$ and $G_{h} \# Q$, the third
    term on RHS of equation \ref{lowerBoundPart2} is exactly $W_2^2(G_{h} \# P, G_{h} \# Q)$. Thus we have that
    \begin{equation}
        \label{lowerBoundPart3}
        \begin{split}
            W_2^2(P,Q) \leq W_2^2(G_{h} \# P, G_{h} \# Q) + K h^2 + \\
            2 \sum_{i} \sum_{j} \frac{\pi^{*}(c_i,c_j)}{P(C_i) Q(C_j)} \int_{x \in C_i} \int_{y \in C_j} \left( \langle x - c_i, c_j - y \rangle + \langle x - c_i, c_i - c_j \rangle + \langle c_j - y, c_i - c_j \rangle \right) \mu(x) \nu(y) dy dx.
        \end{split}
    \end{equation}
    By Cauchy-Schwarz, and again using that the diameter of $C_i,C_j$ is at most $K h$, we have that
    \begin{equation}
        \label{productTerm}
        \begin{split}
            \left| 2 \sum_{i} \sum_{j} \frac{\pi^{*}(c_i,c_j)}{P(C_i) Q(C_j)} \int_{x \in C_i} \int_{y \in C_j} \left( \langle x - c_i, c_j - y \rangle \right) \right| \leq & \\
            2 \sum_{i} \sum_{j} \frac{\pi^{*}(c_i,c_j)}{P(C_i) Q(C_j)} \int_{x \in C_i} \int_{y \in C_j} \| x - c_i \|_2 \| c_j - y \|_2 \mu(x) \nu(y) dy dx \leq & \\
            2 \sum_{i} \sum_{j} \frac{\pi^{*}(c_i,c_j)}{P(C_i) Q(C_j)} \left(\int_{x \in C_i} \| x - c_i \|_2 \mu(x) dx \right) \left(\int_{y \in C_j} \| c_j - y \|_2  \nu(y) dy \right) \leq & \\
            K h^2 \sum_{i} \sum_{j} \pi^{*}(c_i,c_j) \leq & \\
            K h^2.
        \end{split}
    \end{equation}
    By Equations \ref{lowerBoundPart3} and \ref{productTerm}, we have that
    \begin{equation}
        \label{lowerBoundPart4}
        \begin{split}
            W_2^2(P,Q) \leq W_2^2(G_{h} \# P, G_{h} \# Q) + K h^2 + & \\
            2 \sum_{i} \sum_{j} \frac{\pi^{*}(c_i,c_j)}{P(C_i) Q(C_j)} \int_{x \in C_i} \int_{y \in C_j} \left( \langle x - c_i, c_i - c_j \rangle + \langle c_j - y, c_i - c_j \rangle \right) \mu(x) \nu(y) dy dx.
        \end{split}
    \end{equation}
    To handle the first remaining integrand, we express $\mu(x) = \mu(x) - \mu(c_i) + \mu(c_i)$. Then we have that
    \begin{equation}
        \begin{split}
           2 \sum_{i} \sum_{j} \frac{\pi^{*}(c_i,c_j)}{P(C_i) Q(C_j)} \int_{x \in C_i} \int_{y \in C_j} \left( \langle x - c_i, c_i - c_j \rangle \right) \mu(x) \nu(y) dy dx =  & \\
           2 \sum_{i} \sum_{j} \frac{\pi^{*}(c_i,c_j)}{P(C_i) } \int_{x \in C_i} \langle x - c_i, c_i - c_j \rangle \mu(c_j) dx + & \\
           2 \sum_{i} \sum_{j} \frac{\pi^{*}(c_i,c_j)}{P(C_i) } \int_{x \in C_i} \langle x - c_i, c_i - c_j \rangle (\mu(x) - \mu(c_i)) dx 
        \end{split}
    \end{equation} 
    The first integrand above is zero due to symmetry of the linear term about $c_i$. The second term in absolute value is handled using Cauchy-Schwarz and the $\alpha$-H\"older continuity
    of $\mu$, yielding for this term an upper bound of $K h^{1+\alpha}$ where $K$ depens only on $d,C$. Here we have used that $\| c_i - c_j \|_2$ is upper bounded for all $i,j$ by a constant
    depending only on $d$. Hence 
    \[
    \left| 2 \sum_{i} \sum_{j} \frac{\pi^{*}(c_i,c_j)}{P(C_i) Q(C_j)} \int_{x \in C_i} \int_{y \in C_j} \left( \langle x - c_i, c_i - c_j \rangle \right) \mu(x) \nu(y) dy dx \right| \leq K h^{1+\alpha}.
    \]
    By an analgous argument using the $\alpha$-H"older continuity of $\nu$, we have that 
    \[
    \left| 2 \sum_{i} \sum_{j} \frac{\pi^{*}(c_i,c_j)}{P(C_i) Q(C_j)} \int_{x \in C_i} \int_{y \in C_j} \left( \langle c_j - y, c_i - c_j \rangle \right) \mu(x) \nu(y) dy dx \right| \leq K h^{1+\alpha}.
    \]
    Using these and Equation \ref{lowerBoundPart4}, we conclude that for some $K$ depending only $d,C$, 
    \[
    W_2^2(P,Q) \leq W_2^2(G_{h} \# P, G_{h} \# Q) + K h^{1+\alpha}.
    \]
    
\end{proof}

\subsection*{LLM Use Disclosure}
In addition to standard help with formatting or phrasing, we used LLM to search literature.  This included the help in producing a complete related work, and contextualizing results.  Crucially it also helped us identify the \citep{auricchio2018computing} paper which transforms the sketched representation into integral flow problem which can be solved efficiently.  We of course reviewed all literature and techniques before integrating into the paper.


\newpage

\end{document}